%% file: neurips_2026.tex
\documentclass{article}

\usepackage[numbers]{natbib}
\usepackage[preprint]{neurips_2026}

\usepackage[utf8]{inputenc} 
\usepackage[T1]{fontenc}    
\usepackage{url}            
\usepackage{booktabs}       
\usepackage{amsfonts}       
\usepackage{nicefrac}       
\usepackage{microtype}      
\usepackage{xcolor}         
\newcommand{\ourMethod}{PRISM-LoRA} %

\definecolor{fhcolor}{rgb}{0.523, 0.235, 0.625}

\definecolor{citecolor}{HTML}{0071bc}
\usepackage[colorlinks=False, allcolors=citecolor]{hyperref}  

\usepackage{multirow}

\usepackage{amsthm,amsmath,amssymb}
\usepackage{bbm}
\usepackage{graphicx}

\usepackage{float}
\usepackage{subcaption}
\usepackage{enumitem}
\usepackage{etoc}

\usepackage{wrapfig}
\usepackage{multirow}
\usepackage[table]{xcolor}
\usepackage{fontawesome5}
\newtheorem{theorem}{Theorem}
\newtheorem{lemma}{Lemma}
\title{Principal-timestep Restricted Init via Sparse Matrix-decomposition in Flow-matching}
\author{%
    \textbf{Jiayang Gu\textsuperscript{1}\thanks{These authors contribute equally to this work.}},
    \textbf{Zheng Fang\textsuperscript{1}$^*$},
    \textbf{Lichuan Xiang\textsuperscript{1}},
    \textbf{Xu Cai\textsuperscript{2}},
    \textbf{Fanghui Liu\textsuperscript{1}},
    \textbf{Hongkai Wen\textsuperscript{1} \thanks{Correspondence.}}
\\
\\
    \textsuperscript{1}University of Warwick,
    \textsuperscript{2}Bytedance \\
}

\begin{document}

\maketitle

\begin{abstract}

Flow-matching diffusion models have recently emerged as a strong paradigm for high-fidelity visual generation. However, their prohibitively high fine-tuning cost limits scalability to downstream tasks. While Low-Rank Adaptation (LoRA) combined with spectral initialization has demonstrated accelerated convergence and improved performance in autoregressive language models by better aligning gradient directions, we find that it fails to deliver similar gains in diffusion fine-tuning, often yielding marginal or even negative improvements over vanilla LoRA.
We attribute this discrepancy to a fundamental mismatch between LoRA’s low-rank parameterization and the intrinsically high-rank gradients induced by the flow-matching objective. In particular, stochastic timestep sampling introduces directionally heterogeneous gradient signals across training steps, leading to misaligned updates under low-rank constraints.
To address this issue, we propose \ourMethod, a Principal-timestep Restricted Init via Sparse Matrix-decomposition framework that improves gradient alignment during fine-tuning. Our method consists of two key components: (i) principal timestep selection, which restricts initialization gradients to a subset of dominant timesteps to suppress effective gradient rank, and (ii) principal channel filtering, which removes task-irrelevant channels, enabling the one-step spectral initialization gradient to better align with the long-horizon optimization trajectory. 
Extensive experiments demonstrate that \ourMethod \ consistently improves both convergence speed and final performance across multiple diffusion fine-tuning benchmarks, including subject-driven generation, controllable generation, and deblurring, achieving not only performance improvement but also earlier stages of convergence over baseline LoRA and other spectral-init methods. Our code is available at \url{https://anonymous.4open.science/r/Prism-LoRA-28ED}.

\end{abstract}

\section{Introduction}
\label{sec:1-intro}
\input{chapters/1-intro}

\section{Related Work}

\input{chapters/2-relat}

\section{Method}
\input{chapters/4-metho}

\section{Experiment}
\input{chapters/5-expri}

\section{Conclusion}
\input{chapters/6-concl}

\begin{ack}
Use unnumbered first level headings for the acknowledgments. All acknowledgments
go at the end of the paper before the list of references. Moreover, you are required to declare
funding (financial activities supporting the submitted work) and competing interests (related financial activities outside the submitted work).
More information about this disclosure can be found at: \url{https://neurips.cc/Conferences/2026/PaperInformation/FundingDisclosure}.

Do {\bf not} include this section in the anonymized submission, only in the final paper. You can use the \texttt{ack} environment provided in the style file to automatically hide this section in the anonymized submission.
\end{ack}

{
\bibliographystyle{abbrvnat}
\bibliography{ref}
}

\appendix
\input{chapters/append}

\newpage
\input{checklist.tex}

\end{document}

%% file: chapters/1-intro.tex
\begin{figure}[ht]
    \centering
    \begin{subfigure}[t]{0.48\linewidth}
        \centering
        \includegraphics[width=\linewidth]{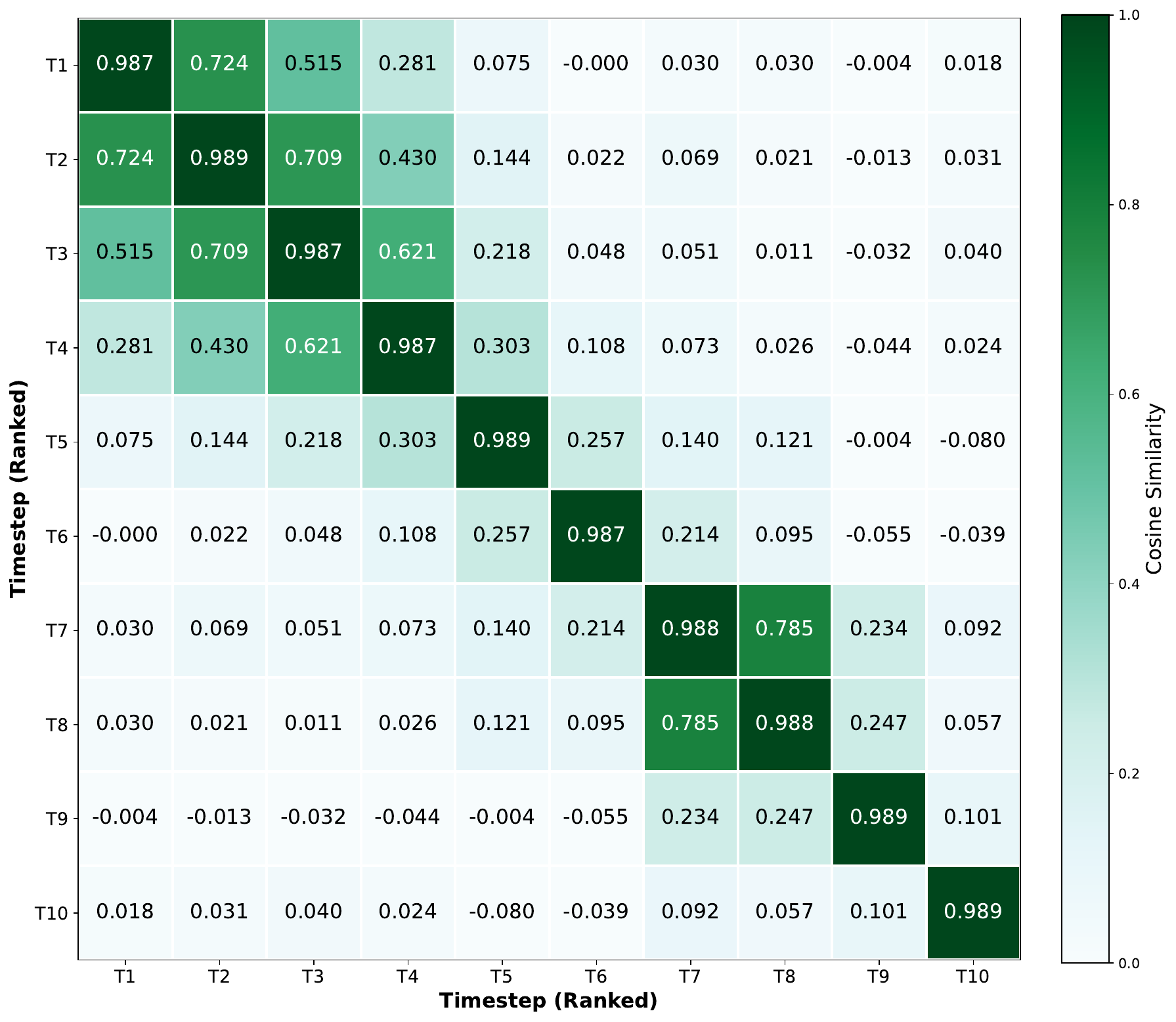}
        \caption{Gradient cosine similarity matrix.}
        \label{fig:heatmap}
    \end{subfigure}
    \hfill
    \begin{subfigure}[t]{0.48\linewidth}
        \centering
        \includegraphics[width=\linewidth]{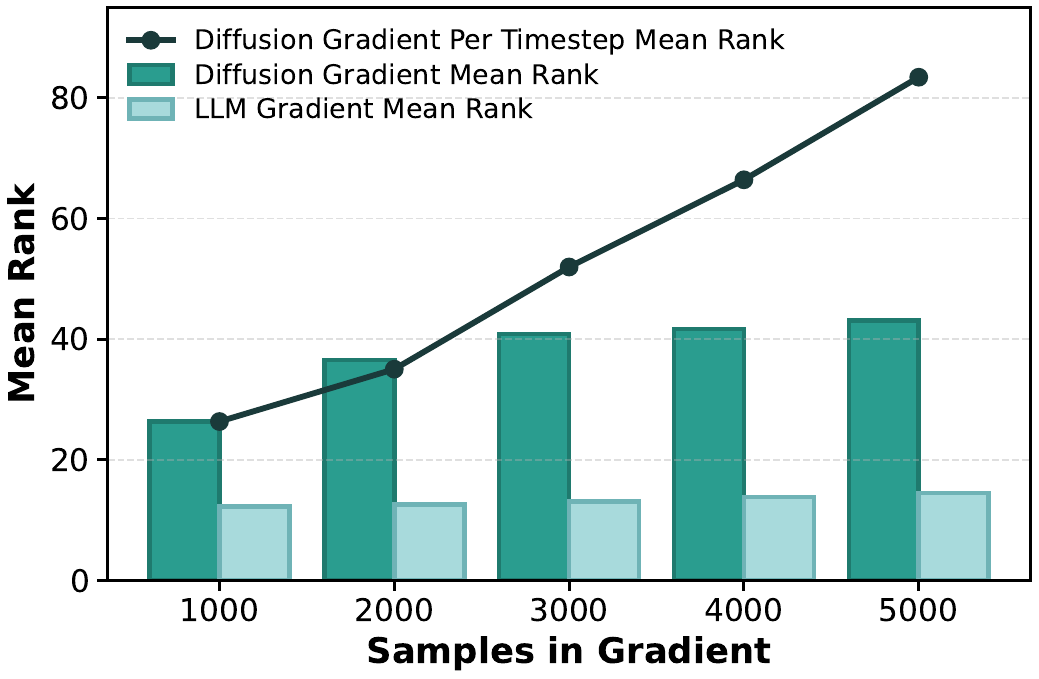}
        \caption{Mean rank comparison.}
        \label{fig:bars}
    \end{subfigure}
    \caption{Left figure shows the cosine similarity matrix on different diffusion timesteps, right figures show when sampled from a wider range of timesteps and sampling numbers, the difference in gradient rank and compared with LLM gradient.}
    \label{fig:first_figure}
    \vspace{-1.0em}
\end{figure}



Recently, the spectral initialization on LoRA has been receiving increasing research attention.
Other LoRA variants~\cite{lora-oft, luo2023lcm, soboleva2026t}, either require a dedicated design of hyperparameters or extra training parameters to improve the performance of fine-tuning.
In contrast, spectral initialization~\cite{lora-ga, lora-one, lora-sb} requires few hyperparameters and comes with tight theoretical guarantees on fine-tuning scaling. 
By initializing the LoRA adapters with the decomposition of either the pre-trained weights or a gradient obtained from a single-batch forward pass, it achieves both stronger performance and faster convergence.

The achievement of spectral initialization can be attributed to the successful transfer of the knowledge from the fine-tuning pattern of the full-parameter model to the low-rank parameters of LoRA.
Existing one-step gradient methods typically justify their effectiveness along two axes: (i) they show that the LoRA parameterization obtained by decomposing the first-step gradient attains the minimal approximation error relative to full fine-tuning at initialization; and (ii) they argue that this initialization remains well-aligned with the dominant gradient directions throughout subsequent training. Both arguments implicitly rely on a critical premise---that the \emph{effective rank} of the full-model gradient is comparable to the LoRA rank.
As effective rank is related to the energy distribution towards singular value, it indicates that using how many of the singular vectors on the front would have a reconstructive effect towards the original matrix.
If effective rank matches LoRA's rank, a good LoRA initialization weight derived from the decomposition result of the matrix can set an optimization start point similar as a full fine-tuning model does.

This premise, however, breaks down in the diffusion setting.
Figure~\ref{fig:first_figure} illustrates the issue from two angles.
Figure~\ref{fig:heatmap} shows that gradients from different timesteps share little similarity, indicating that they span largely distinct subspaces. Consequently, the aggregated gradient $G^\natural = \mathbb{E}_t[G_t]$ exhibits a substantially higher effective rank than any single $G_t$.
Figure~\ref{fig:bars} quantifies this effect. We define the empirical effective rank as the number of top singular vectors needed to reconstruct 99\% of the gradient's Frobenius norm, and progressively enlarge the aggregated timestep range (``1000'' covers $T_1$--$T_2$, ``2000'' covers $T_1$--$T_4$, and so on). Compared to an LLM baseline, the diffusion gradient's effective rank grows markedly faster as more timesteps are aggregated; the overlaid line further shows that this rank itself rises as timesteps move from noise to data.
As a result, $G^\natural$ inherits an inflated rank structure, and the low-rank SVD on which one-step methods like LoRA-One rely becomes a lossy operation in the diffusion setting.

We attribute this failure to a fundamental \textbf{mismatch between spectral-init's low-rank assumption and the gradient geometry of diffusion training}. In LLMs, the one-step gradient is well-approximated as low-rank, justifying spectral initialization. In diffusion, the one-step gradient aggregates over a sequence of timesteps $\mathbb{E}_t[G_t]$; since different t correspond to denoising tasks at different signal-to-noise ratios, the $G_t$ point in widely divergent directions. The aggregated gradient, therefore, has substantially higher effective rank, violating the core premise of spectral initialization.

On this basis, we raise 2 hypotheses corresponding to 2 aspects, shown as follows:
\paragraph{Hypothesis 1}(Principal Timesteps reduce effective rank) Restricting gradient estimation to small timestep t(closer to noise) $t\rightarrow0$ yields a gradient matrix $G^\natural$ with substantially lower effective rank than uniform timestep sampling, improving the fidelity of ranking-$r$ spectral initialization.
\paragraph{Hypothesis 2}(Sparse decomposition improves finite-sample subspace recovery) The irrelevant channel signal would hinder the reconstructed weight from being aligned with the fine-tuning orientation.

Corresponding to Hypothesis 1, we introduce a principal timestep selection strategy that confines gradient estimation to a small set of timesteps, suppressing the effective rank of the guidance gradient before decomposition. Corresponding to Hypothesis 2, we apply a sparse channel decomposition that isolates task-relevant channels from the gradient signal, so that the reconstructed low-rank update aligns with the long-horizon fine-tuning trajectory rather than being polluted by irrelevant directions.
Our contributions are summarized as follows:

1. We identify and analyze the failure mode of spectral initialization in diffusion fine-tuning, attributing it to a mismatch between the low-rank assumption of spectral initialization and the high-rank gradient geometry induced by stochastic timestep sampling.

2. Motivated by two hypotheses on gradient rank and channel relevance, we propose \ourMethod, which combines principal timestep selection and sparse channel decomposition to produce a low-rank, well-aligned initialization for diffusion LoRA.

3. We validate our method on four downstream tasks (Dreambooth, Canny, Depth, Deblur) across Stable Diffusion-3 and Flux-1, achieving not only performance improvement but also training acceleration at the initial training stage over strong spectral-init baselines.



%% file: chapters/2-relat.tex
\paragraph{Diffusion-based Generative Models.}
The diffusion model was originally introduced by \cite{sohl2015deep}, which has replaced Generative Adversarial Network (GAN)-based methods \cite{wang2024spatial,yan2024orthogonal,mei2024holo}, and has been widely applied in the field of image synthesis \cite{hua2023dreamtuner,fangflexcontrol,zhang2025scaling}. Compared to adversarial optimization approaches, which easily lead to training instability and even collapse, diffusion models utilize the probabilistic optimization scheme to ensure stable convergence and generation diversity. The Latent Diffusion Model (LDM) \cite{rombach2022high} reduces computational demands by transferring the diffusion process from pixel space to latent space. Based on it, Transformer-based architectures such as Diffusion Transformer (DiT) \cite{pu2024efficient,zhao2025dydit++,chen2025unireal,jia2025d,cai2025shortcutting}, Stable Diffusion 3.0/3.5 \cite{SD3}, FLUX \cite{flux} further extend diffusion's capacity to model long-range dependencies and scale to massive multi-modal datasets. Meanwhile, Rectified Flow optimization \cite{lipman2022flow,liu2022flow,liu2023instaflow} enables faster sampling with fewer denoising steps without degrading quality. Although the merits are obvious, pre-training a diffusion model is undoubtedly expensive.
\vspace{-0.5em}

\paragraph{Post-training of Diffusion Models.} 
Compared to pre-training, fine-tuning the fully-trained diffusion model on downstream task with a small-scale dataset \cite{wu2025difix3d+,ji2026lidarpainter,chen2025ultrafusion} can be more feasible and cost-friendly. DreamBooth \cite{ruiz2023dreambooth} finetunes text-to-image diffusion models using a few subject-specific images provided by practitioners, binding the subject to a unique identifier token to enable high-fidelity and controllable personalized generation. \cite{zhang2023adding,mou2024t2i,peng2024controlnext} freeze the denoising backbone and finetune an additional module to transfer conditional controls into the noise latent space, thereby achieving spatially-aligned generation. Meanwhile, due to the decoupled design with the denoising network—parameter updation occurring only on the trainable adapter, the training stability and convergence speed are both superior to the full-parameter fine-tuning. To further compress the trainable parameters, \cite{tan2024ominicontrol,tan2025ominicontrol2,zhang2025easycontrol,wang2025unicombine,fangDynFusion} introduce efficient LoRA module \cite{hu2022lora} for diffusion fine-tuning, which not only reduces the demand for gpu memory, but also makes it convenient for the diffusion model to switch freely between different design tasks. However, the number of update steps required for fine-tuning is still quite large, and the performance is difficult to match that of full-parameter fine-tuning.
\vspace{-0.5em}

\paragraph{Improving Fine-tuning Efficiency.}
Low-Rank Adaptation (LoRA) \cite{hu2022lora,dettmers2023qlora,luo2023lcm,soboleva2026t} has dominated efficient large vision/language models' fine-tuning. Apart from the parameter efficiency, researchers have also extended to explore time efficiency. On the one hand, some research works focus on improving initialization strategies to enhance the effectiveness of low-rank updates. \cite{lora-sb,lora-ga,lora-one} propose an initialization method based on gradient update approximation, which enables the low-rank parameters to approach the optimal update direction at the early stage of training. \cite{zi2023delta,pissa,liu2024dora} leverage singular value decomposition (SVD) \cite{weiland2009singular} to confine the adaptation within the principal singular subspace, thereby enabling more stable and robust initialization and updates of model weights by better aligning with the intrinsic structure of the pretrained model. On the other hand, there are also efforts to improve training efficiency by focusing on the optimization process. \cite{hayou2024lora+,huang2024allora,zhang2023lora,zhao2024galore} allocate different learning rates to different matrices in low-rank decomposition, significantly accelerating convergence and improving performance without changing the initialization method. However, due to the multi-optimization training objectives in diffusion fine-tuning, these methods fail to capture a robust enough gradient by only relying on one-step gradient to guide the overall training process.

%% file: chapters/4-metho.tex
In this section, we will introduce how we solve the new problem of the gradient-based method applied in the diffusion domain.
We understand that fluctuating training loss is the inevitable nature of fine-tuning the diffusion model, and it can hardly ensure that the 1-step optimization direction points at the average optimization direction of the overall fine-tuning process.
We notice this problem and simplify it in Sec. \ref{sec:4-1}, defined as \textit{High-ranking $G^\natural$ problem}. It can be solved in 2 aspects, the timestep and channel, shown in Sec. \ref{sec:principal_timestep} and Sec. \ref{sec:golden_channel}, respectively.
Finally, we formulate the overall training process in Sec.~\ref{sec:spectral_init}, to have an overall look at our method.
For preliminaries, see Appendix.~\ref{sec:supp_prelim}.

\begin{figure}[htbp]
    \centering
    \begin{subfigure}[b]{0.24\textwidth}
        \centering
        \includegraphics[width=\textwidth]{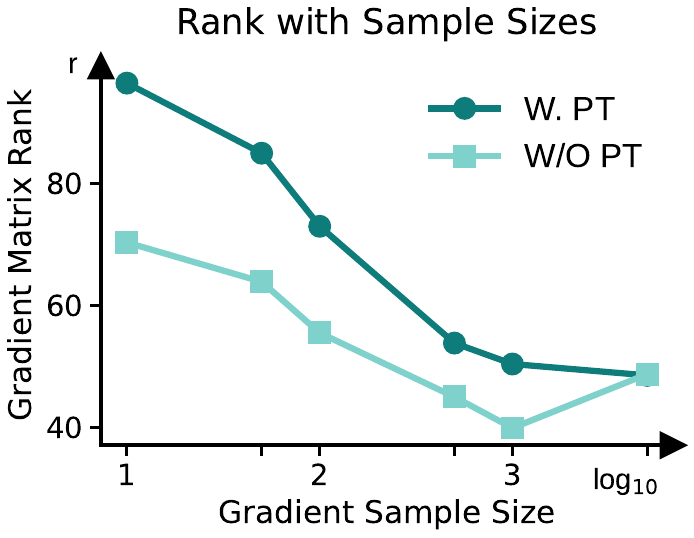}
        \caption{}
        \label{fig:sub_a}
    \end{subfigure}
    \hfill
    \begin{subfigure}[b]{0.24\textwidth}
        \centering
        \includegraphics[width=\textwidth]{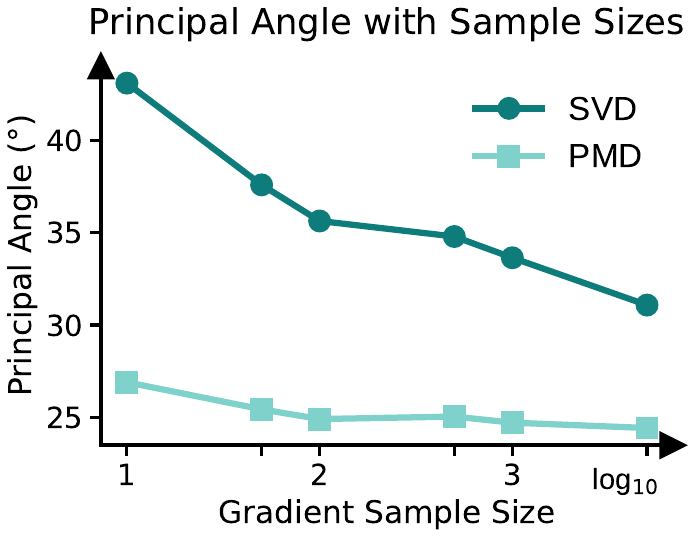}
        \caption{}
        \label{fig:sub_b}
    \end{subfigure}
    \hfill
    \begin{subfigure}[b]{0.24\textwidth}
        \centering
        \includegraphics[width=\textwidth]{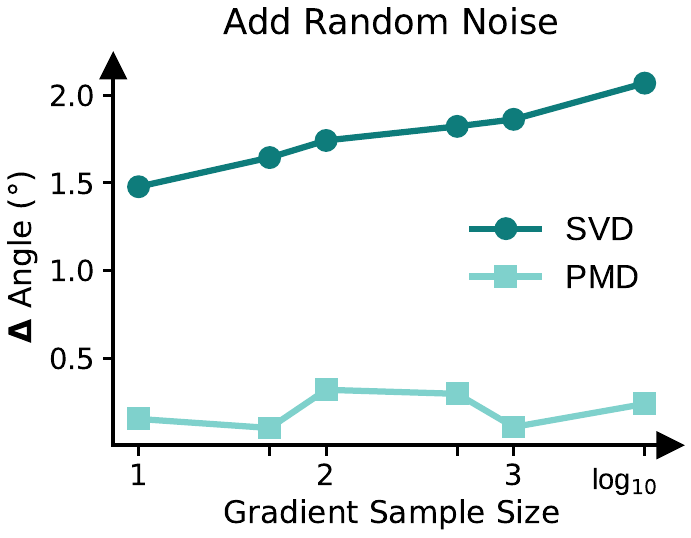}
        \caption{}
        \label{fig:sub_c}
    \end{subfigure}
    \hfill
    \begin{subfigure}[b]{0.24\textwidth}
        \centering
        \includegraphics[width=\textwidth]{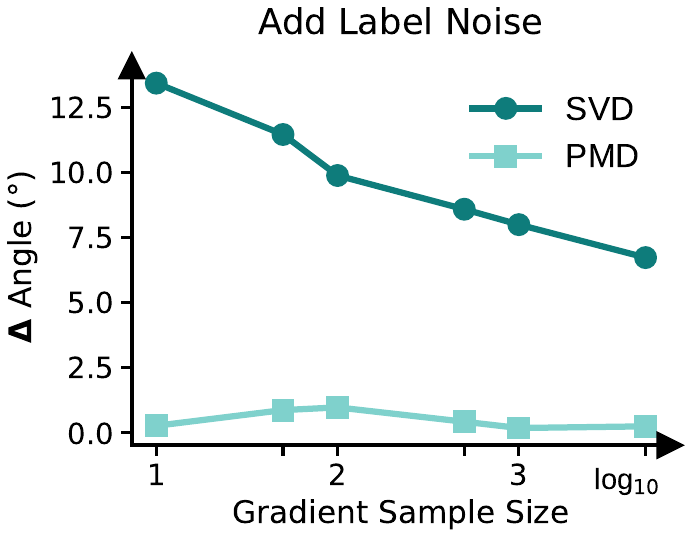}
        \caption{}
        \label{fig:sub_d}
    \end{subfigure}
    \caption{\textbf{Empirical validation of our proposed hypotheses.}
    Fig.~\ref{fig:sub_a} indicates that the principal timestep(PT) achieves a lower rank than stochastic timesteps sampling.
    Fig.~\ref{fig:sub_b} under principal timestep sampling, calculates the principal angle with $G$ with $50K$ sample size, achieving a closer direction.
    Fig.~\ref{fig:sub_c} and ~\ref{fig:sub_d} shows golden channel module has a better ability in filtering out irrelevant channel information compared with SVD. For further analysis, see Sec.~\ref {sec:golden_channel}.
    }
    \label{fig:method_intuitive}
    \vspace{-1.0em}
\end{figure}

\subsection{Barriers in Flow-matching Diffusion Model}
\label{sec:4-1}

\paragraph{High-ranking $G^{\natural}$.}
Near-orthogonal per-timestep gradients do not lead to a low-rank average, as each $G_t$ contributes an independent direction to the column or row space of $G^\natural$.
Formally, if $G_{t_1}$ and $G_{t_2}$ have near-orthogonal top-$r$ subspaces, the rank of their sum approaches $rank(G_{t_1})+rank(G_{t_2})$. 
Averaging over a continuous number of timesteps therefore inflates the effective rank of $G^\natural$ relative to any single $G_t$.
From a mathematical perspective, average gradient from different directions would lead to a high-ranking $G^{\natural}$.
It has 2 drawbacks: The decomposed LoRA parameter from high-ranking $G^{\natural}$ has a conflict with the low-ranking training objective with LoRA; High-ranking $G^{\natural}$ may have a deviated guidance on multiple training objects.
However, directly measuring the true rank of the gradient matrix is challenging. To address this, we convert this problem into measuring the spectral of the gradient matrix, which would have a positive correlation with the actual rank $r^*$ of $G^\natural$.
\vspace{-0.8em}

\paragraph{From gradient rank to posterior covariance spectrum.}
Directly analyzing the spectrum of $G^\natural$ is intractable, as it entangles the data-driven residual with the architecture-dependent Jacobian $J_\theta$. Under a locally linearized view, the per-timestep gradient covariance decomposes as
\begin{equation}
G_t \;\approx\; \mathbb{E}_{x_t}\!\left[\, J_\theta^\top \, \Sigma_v(t) \, J_\theta \,\right],
\end{equation}
where $\Sigma_v(t) = \tfrac{1}{t^2}\mathrm{Var}[x_0 \mid x_t]$ is the velocity-regression residual covariance. 
Since the architecture-dependent factor $J_\theta$ admits no general treatment, and since an ill-conditioned output target necessarily induces an ill-conditioned $G_t$, we adopt $\Sigma_v(t)$ as a principled proxy for the spectral behavior of $G^\natural$. Furthermore, as absolute rank is numerically unstable, we replace it with a scale-invariant surrogate---the concentration of the spectrum---which is positively correlated with the effective rank and amenable to rigorous comparison via majorization.

We measure spectral concentration via the $r_{N}$ metric: the minimum number of top eigenvalues needed to capture $N$\% of the total variance, $r_{N}(\Sigma) = \min\{r: \sum_{i=1}^{r} {\sigma}_i \ge N\%\}$ where ${\sigma}_i$ are singular values. Small $r_{N}$ indicates a sharp, concentrated spectrum (effectively low-rank); $r_{N} \to d$ indicates a flat spectrum (high-rank).



\subsection{Principal Timestep}
\label{sec:principal_timestep}
The effectiveness of the principal timestep is based on the following justification:
the spectrum of the one-step gradient covariance $\Sigma_v(t)$ becomes flatter when mixing with more close-to-target timestep.
As a result, restricting spectral initialization to smaller $t$ leads to top-$r$ subspace capturing a substantially larger fraction of the total signal.
We formulate this theorem as follows:
\begin{theorem}[Monotone spectral flattening]
\label{thm:monotone-flattening}
Under the assumptions of Sec.~\ref{sec:4-1}, the rank-N index $r_{N}$ 
of the one-step gradient covariance $G^{\natural}$ is monotonically 
non-decreasing in $t$ on $[0,1]$:
\begin{equation}
    r_{N}(\Sigma_v(0)) \;\leq\; r_{N}(\Sigma_v(0.5)) 
    \;\leq\; r_{N}(\Sigma_v(1)) \;=\; \lceil N / 100, d \rceil.
\end{equation}
\end{theorem}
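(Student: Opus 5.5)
We work with the normalized residual covariance $\widetilde\Sigma_v(t)=\Sigma_v(t)/\operatorname{tr}\Sigma_v(t)$. Since $r_N$ only depends on the fraction of variance captured by the top eigenvalues, the scalar factor $1/t^2$ in $\Sigma_v(t)=\tfrac{1}{t^2}\mathrm{Var}[x_0\mid x_t]$ plays no role. The strategy has three parts. First, show that the normalized spectrum of the posterior covariance $\mathrm{Var}[x_0\mid x_t]$ moves down the majorization order as $t$ increases. Second, use the fact that $r_N$ is monotone with respect to majorization. Third, evaluate the endpoint $t=1$ directly.

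I will read $t=1$ as pure noise, as the statement requires: there $x_t$ is independent of $x_0$. I will also read the endpoint value as $\lceil N d/100\rceil$, which is what a flat spectrum gives.

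\textbf{Step 1 (reduction to majorization).} If $\lambda(s)$ majorizes $\lambda(t)$, then every partial sum satisfies $\sum_{i\le r}\lambda_i(s)\ge\sum_{i\le r}\lambda_i(t)$. Hence the smallest $r$ reaching the $N\%$ threshold can only be larger at $t$, so $r_N(\Sigma_v(s))\le r_N(\Sigma_v(t))$. This step is immediate. It reduces the theorem to proving that $t\mapsto\lambda(\widetilde\Sigma_v(t))$ is decreasing in majorization.

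\textbf{Step 2 (Gaussian model and endpoints).} I will work under the local Gaussian assumption implicit in Sec.~\ref{sec:4-1}: $x_0\sim\mathcal N(\mu,C)$ and $x_t=(1-t)x_0+t\epsilon$, where $C$ has eigenvalues $c_1\ge\dots\ge c_d$ and a common eigenbasis. The posterior covariance is then $(C^{-1}+\tfrac{(1-t)^2}{t^2}I)^{-1}$. Its eigenvalues are
$$\lambda_i(t)=\frac{c_i}{1+\rho(t)c_i},\qquad \rho(t)=\frac{(1-t)^2}{t^2}.$$
\begin{itemize}
\item At $t=1$, $\rho=0$ and the posterior equals the prior $C$. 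For the endpoint equality in the statement I will additionally need the whitened data assumption $C\propto I$, so that the spectrum is flat and $r_N=\lceil Nd/100\rceil$. I will state this explicitly as part of "the assumptions of Sec.~\ref{sec:4-1}".
\item At $t\to 0$, $\rho\to\infty$ and $\lambda_i\approx 1/\rho$. The spectrum becomes isotropic, which threatens the left inequality.
\end{itemize}

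\textbf{Step 3 (monotonicity).} The ratio $\lambda_i/\lambda_j=\frac{c_i(1+\rho c_j)}{c_j(1+\rho c_i)}$ for $c_i>c_j$ is decreasing in $\rho$. Any family of positive vectors whose pairwise ratios, taken in a fixed order, move monotonically is ordered in majorization after normalization. So the normalized spectrum flattens as $\rho$ grows, that is, as $t$ decreases. This is the \emph{opposite} of the direction claimed.

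This is the main obstacle. In the pure Gaussian model the theorem's direction fails, and with $C\propto I$ every $t$ gives the same flat spectrum. To obtain the claimed ordering I would change the model:
\begin{itemize}
\item Replace the Gaussian prior by a low-dimensional (manifold or mixture) data model. There, near the data end the posterior covariance concentrates on the few tangent directions that remain uncertain, while near pure noise it approaches the prior covariance.
\item Take the prior covariance isotropic on the ambient space, so that the $t=1$ spectrum is flat.
\end{itemize}
With that model, I would show that the top-$k$ fraction of $\mathrm{Var}[x_0\mid x_t]$ is decreasing in $t$. The tool is the law of total variance: $\mathrm{Var}[x_0\mid x_s]=\mathbb E[\mathrm{Var}[x_0\mid x_t]\mid x_s]$ plus a term, which expresses the later-time covariance as a mixture. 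Ky Fan convexity of the sums of the top $k$ eigenvalues would then give the ordering.

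Making this rigorous, and reconciling the orientation of $t$ in the paper's convention, is the step I expect to require the most care. It may ultimately force restating the theorem under an explicit data-model assumption.
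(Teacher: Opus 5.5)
\textbf{The gap is a reversed time convention.} The obstacle you hit in Step 3 comes from your convention, not from the theorem. As written, the proposal does not prove the statement: it ends by asserting that the Gaussian model contradicts the theorem, and then sketches a manifold-model argument that is never carried out.

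In the paper, $x_0\sim\mathcal N(0,I_d)$ is the \emph{noise} and $x_1\sim p_{\text{data}}$ (Gaussian surrogate $\mathcal N(0,\Sigma_1)$) is the \emph{data}, with $x_t=(1-t)x_0+tx_1$ (Eq.~\eqref{eq:training_loss} and the Preliminaries). The object $\Sigma_v(t)=t^{-2}\mathrm{Var}[x_0\mid x_t]$ is the posterior covariance of the isotropic noise. Hence $t=1$ is the \emph{data} end. There $x_t=x_1$ is independent of $x_0$, so $\Sigma_v(1)=I_d$ is flat for every $\Sigma_1$, and no whitening assumption $C\propto I$ is needed.

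Your reading ``$t=1$ is pure noise,'' together with a data prior $C$, amounts to the substitution $t\mapsto 1-t$. Indeed, your $\rho$ equals $t^2/(1-t)^2$ in the paper's time, which is increasing in $t$. Conditioning on the data rather than the noise is harmless for $t\in(0,1)$. Since $x_0=(x_t-tx_1)/(1-t)$, we have $\mathrm{Var}[x_0\mid x_t]=\tfrac{t^2}{(1-t)^2}\mathrm{Var}[x_1\mid x_t]$, so the normalized spectra coincide.

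\textbf{With the convention fixed, your own steps finish the proof.} Your computation gives
$\sigma_i(t)=\lambda_i/\bigl((1-t)^2+t^2\lambda_i\bigr)\propto \lambda_i/(1+w\lambda_i)$ with $w=t^2/(1-t)^2$. Your pairwise-ratio argument then shows that $\sigma_i/\sigma_j$ (for $\lambda_i>\lambda_j$) decreases from $\lambda_i/\lambda_j$ at $t=0$ to $1$ as $t\to 1$. So the normalized spectrum flattens as $t$ \emph{increases}. Combined with your Step 1, this yields exactly the claimed chain, with $\sigma_i(0)=\lambda_i$ and $\sigma_i(1)=1$. The manifold model, law of total variance, and Ky Fan detour are unnecessary. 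Treat $t=1$ separately, since the spectral map is constant there: the uniform vector is majorized by every probability vector.

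\textbf{Comparison with the paper.} The corrected argument is essentially the paper's proof. The paper obtains the same majorization from Lemma~\ref{lem:majorization}: a strictly increasing, strictly concave $f$ with $f(0^+)=0$ makes $\lambda/f(\lambda)$ increasing, and a cut-point argument follows. It applies this to the M\"obius transition map $g_t\circ g_s^{-1}$. Your pairwise-ratio criterion is the same monotone-ratio mechanism and saves you from computing that composition.

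The paper's prose is itself inconsistent about which end is ``noise,'' which may have misled you. The theorem, however, concerns the defined $\Sigma_v(t)$, whose sharp end is $t=0$ (the noise end), where it inherits the spectrum of $\Sigma_1$.
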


By the flow-matching identity $x_t = (1-t)x_0 + t x_1$, we eliminate $x_1$ via $x_1 = \bigl(x_t - (1-t)x_0\bigr)/t$, so that the posterior covariance $\mathrm{Var}[x_0 \mid x_t]$ depends on $x_0$ alone. Diagonalizing in the eigenbasis of $\Sigma_1$, the $i$-th singular value of $\Sigma_v(t)$ admits the closed form
\begin{equation}
\label{eq:definition}
    \sigma_i(t) \;=\; \frac{\lambda_i}{(1-t)^2 + t^2 \lambda_i},
\end{equation}
where $\{\lambda_i\}_{i=1}^{d}$ are the singular values of the data covariance $\Sigma_1 = \mathrm{Cov}(x_1)$, and by symmetric positive semi-definiteness coincide with its eigenvalues. A full derivation is deferred to Appendix~\ref{supp:sec:problem_formulation}.

\paragraph{Two-Ends Comparison.} We first examine the two endpoints $t \to 0$ and $t \to 1$. At $t \to 0$, $\sigma_i(0) = \lambda_i$, recovering the spectrum of $\Sigma_1$; at $t = 1$, $\sigma_i(1) = 1$ for all $i$, yielding an isotropic covariance $\Sigma_v(1) = I_d$. 
To compare these two spectra, we treat $\sigma_i(t) = f_t(\lambda_i)$ as a univariate map with $t$ as a fixed parameter and $\lambda_i$ as the input variable. A direct computation shows that $f_t$ is strictly increasing, strictly concave, and satisfies $f_t(0^+) = 0$. By Lemma~\ref{lem:majorization} (normalized-spectrum majorization), these three properties imply that the normalized spectrum of $\Sigma_v(t)$ is \emph{majorized} by that of $\Sigma_1$---that is, the concave map $f_t$ compresses large singular values and lifts small ones, producing a strictly flatter output sequence. Consequently, every Schur-concave flatness functional (including $r_{N}$) satisfies $r_{N}(\Sigma_v(t)) \geq r_{N}(\Sigma_1)$ for any $t \in (0,1)$, with strict inequality whenever $\Sigma_1$ has a non-degenerate spectrum. Detailed arguments for these two endpoints are given in Appendices~\ref{supp:sec:spectial_t_0} and~\ref{supp:sec:spectial_t_1}.

Having established flattening relative to the endpoint $t = 0$, we next strengthen the result to \emph{full-range monotonicity}: the normalized spectrum $\{\sigma_i(t)\}_{i=1}^{d}$ becomes strictly flatter as $t$ increases on the entire interval $[0,1]$, not merely at the endpoints. This is established in the following subsection by a second application of Lemma~\ref{lem:majorization} to the transition map between arbitrary time pairs $t_1 < t_2$.

\paragraph{Full-Range Monotonicity.}
\label{sec:full-range-monotonicity}

We now strengthen Theorem~\ref{thm:monotone-flattening} from endpoint comparison to full-range monotonicity. Fix $0 \leq t_1 < t_2 \leq 1$ and define the transition map $h_{t_1 \to t_2}: (0, \infty) \to (0, \infty)$ that sends each singular value at time $t_1$ to its counterpart at time $t_2$:
\begin{equation}
\label{eq:transition-map}
    h_{t_1 \to t_2}(s) \;\triangleq\; 
    f_{t_2}\!\bigl(f_{t_1}^{-1}(s)\bigr), 
    \qquad s \in \bigl\{\sigma_i(t_1)\bigr\}_{i=1}^{d},
\end{equation}
so that $\sigma_i(t_2) = h_{t_1 \to t_2}(\sigma_i(t_1))$ for every $i$.

A direct calculation (Appendix~\ref{supp:sec:transition-map}) shows that $h_{t_1 \to t_2}$ is strictly increasing, strictly concave, and satisfies $h_{t_1 \to t_2}(0^{+}) = 0$. Applying Lemma~\ref{lem:majorization} to $h_{t_1 \to t_2}$ with input sequence $\{\sigma_i(t_1)\}_{i=1}^{d}$ yields
\begin{equation}
    \widehat{\sigma}(t_1) \;\succ\; \widehat{\sigma}(t_2),
\end{equation}
where $\widehat{\sigma}(t) = \sigma(t) / \sum_{j} \sigma_j(t)$ denotes the normalized spectrum. Since every Schur-concave flatness functional respects the majorization order, and $r_{N}$ is Schur-concave on the simplex of normalized spectra, we conclude
\begin{equation}
    r_{N}\bigl(\Sigma_v(t_1)\bigr) \;\leq\; r_{N}\bigl(\Sigma_v(t_2)\bigr), 
    \qquad \forall\, 0 \leq t_1 < t_2 \leq 1,
\end{equation}
with strict inequality whenever $\Sigma_1$ has a non-degenerate spectrum. 
This completes the proof of Theorem~\ref{thm:monotone-flattening}.

In the Fig.~\ref{fig:method_intuitive}, the $r_{N}$ line using SVD method to decompose $G^\natural$, and the sim score is calculated by similarity with the actual fine-tuning delta weights subtracted by the final fine-tuning weights and pre-training weights.

Sec.~\ref{sec:full-range-monotonicity} characterizes the spectrum of the input-space conditional covariance $\Sigma_v(t)$, which encodes the posterior uncertainty over $x_0$ given $x_t$. The empirical gradient that drives spectral initialization, however, is the weight gradient $\hat{G}_t = \frac{1}{N}\sum_i e^{(i)}_t (h^{(i)}_t)^\top$, where $e^{(i)}_t$ is the per-sample prediction error and $h^{(i)}_t$ is the backbone feature. The effective rank of $\hat{G}_t$ is therefore governed by the diversity of $\{h^{(i)}_t\}_{i=1}^{N}$ across samples, rather than by $\Sigma_v(t)$ alone. The two quantities nonetheless align at the data endpoint: as $t \to 0$, inputs $x_t \to x_0$ concentrate near the data manifold, so backbone features collapse to a low-dimensional response region and $\mathrm{rank}_{\mathrm{eff}}(\hat{G}_t)$ drops in tandem with the sharpening of $\Sigma_v(t)$. Sec.~\ref{sec:full-range-monotonicity} thus provides the input-level justification for the data-side endpoint of our Principal Timesteps; the symmetric low-rank behavior at the noise-side endpoint $t \to 1$ arises from a complementary mechanism (feature collapse under near-isotropic inputs) that we examine empirically in Section~\ref{sec:1-intro}.



\subsection{Golden Channel}
\label{sec:golden_channel}

Though $r_{N}^{G^\natural}$ has decreased to a level that can be reconstructed well by SVD under the LoRA rank budget, two concerns remain. First, the irrelevant information carried by $x_t$ may still contaminate $G^\natural$, so the leading singular directions of $G^\natural$ are not guaranteed to align with the true optimization direction of the fine-tuning trajectory. Second, since the gradients we use for reinitialization are estimated from a finite sample of $x_t$, SVD---which fits the dominant energy of \emph{this particular sample}---is sensitive to sample-specific noise rather than to the underlying shared signal across samples.

\paragraph{From SVD to PMD.}
To address both issues, we replace SVD with Penalized Matrix Decomposition (PMD)~\cite{pmd} when extracting the low-rank subspace of $G^\natural$. PMD reformulates the leading singular component as a constrained bilinear maximization with additional sparsity penalties on the factors:
\begin{equation}
\label{eq:pmd}
    \max_{u, v} \; u^\top G\, v \quad \text{s.t.} \quad \|u\|_2 \le 1,\; \|v\|_2 \le 1,\; \|u\|_1 \le c_1,\; \|v\|_1 \le c_2,
\end{equation}
and subsequent components are obtained by deflation. When $c_1, c_2$ are large enough, Eq.~\ref{eq:pmd} reduces to the variational form of SVD; when they are tightened, the recovered factors are encouraged to concentrate on a small set of coordinates rather than spreading energy uniformly across all dimensions. This sparsity prior is well aligned with our setting: the true fine-tuning direction is expected to live on a low-rank \emph{and} structurally compact subspace of the parameter space, while the noise introduced by irrelevant $x_t$ tends to be diffuse. By penalizing diffuse factors, PMD is biased away from sample-specific fluctuations and toward directions that are consistently expressed across gradient samples. We therefore expect PMD to yield a $G^\natural$ that is (i) better aligned with the large-sample limit and (ii) more robust to perturbations on individual samples.

\paragraph{Empirical verification.}
We validate the golden channel with the experiment summarized in Fig.~\ref{fig:method_intuitive}. We treat the gradient estimated from a large number of $x_t$ samples ($50000$ iters) as a proxy for the true optimization direction, and measure how well the rank-$r$ subspace recovered from a small-sample gradient aligns with it, using the arc sine of the principal angle (smaller is better). We compare two decompositions, SVD and PMD, in two regimes: clean small-sample gradients, and under two kinds of noise (Gaussian noise sampled from a Gaussian distribution and random label noise sampled from another dataset).

Two observations support the use of PMD. \emph{(i) Better alignment.} Across all sample budgets, the PMD subspace is markedly closer to the large-sample direction than the SVD subspace; for instance, with only $10$ iters PMD already attains a closer angle compared with the $10^4$ sampled SVD result. \emph{(ii) Better noise robustness.} Under additive Gaussian and sample from another dataset perturbation, SVD degrades substantially, whereas PMD remains nearly unchanged, indicating that the PMD subspace is governed by the cross-sample shared signal rather than by sample-specific fluctuations. 

Together, these results confirm that PMD recovers a $G^\natural$ that is both more faithful to the true fine-tuning direction and more stable under noise---exactly the properties we need for a reliable reinitialization of LoRA.



\subsection{LoRA Initialization via Spectral Decomposition}
\label{sec:spectral_init}

The training pipeline of \ourMethod \ consists of three stages: \emph{gradient estimation}, \emph{parameter initialization}, and \emph{fine-tuning}.
We will introduce the process of each stage in the following.

\paragraph{Stage 1: Gradient Estimation.}
In the first stage, we collect $M$ gradient samples by performing forward-backward passes under the principal timestep distribution:
\begin{equation}
\label{eq:estimate_gradient}
G^\natural = \frac{1}{M} \sum_{i=1}^{M} \nabla_\theta \mathcal{L}_{\text{FM}}\bigl(\theta; x_0^{(i)}, x_1^{(i)}, t_i\bigr), \quad t_i \sim p^\natural(t),\; x_0^{(i)} \sim \mathcal{N}(0, I),\; x_1^{(i)} \sim p_{\text{data}},
\end{equation}
where $\theta$ denotes the (frozen) base-model parameters of the target weight matrix, $M$ is the number of gradient samples, $\mathcal{L}_{\text{FM}}$ is the flow-matching loss defined in Eq.~\eqref{eq:training_loss}, $p^\natural(t)$ is the principal timestep distribution supported on a low-$t$ subinterval of $[0,1]$, $x_0^{(i)}$ and $x_1^{(i)}$ are the noise and data endpoints of the $i$-th sample, and $G^\natural$ is the resulting low-rank gradient estimate on which the subsequent decomposition operates.

\paragraph{Stage 2: Parameter Initialization.}
In the second stage, we apply the golden-channel sparse SVD to $G^\natural$ and initialize the LoRA factors from its leading spectral components:
\begin{equation}
\label{eq:spectral_init}
A_0 = \sqrt{\gamma}\, \bigl[U_{G^\natural}\bigr]_{[:,\,1:r]} \bigl[S_{G^\natural}^{1/2}\bigr]_{[1:r]}, \quad B_0 = \sqrt{\gamma}\, \bigl[S_{G^\natural}^{1/2}\bigr]_{[1:r]} \bigl[V_{G^\natural}\bigr]_{[:,\,1:r]}^\top,
\end{equation}
where $U_{G^\natural}$, $S_{G^\natural}$, and $V_{G^\natural}$ are the left singular vectors, singular values, and right singular vectors produced by the golden-channel sparse decomposition of $G^\natural$; $r$ is the target LoRA rank; $[\cdot]_{[:,\,1:r]}$ and $[\cdot]_{[1:r]}$ denote truncation to the top-$r$ components; $\gamma$ is the LoRA scaling factor; and $A_0 \in \mathbb{R}^{d_{\text{out}} \times r}$ and $B_0 \in \mathbb{R}^{r \times d_{\text{in}}}$ are the initialized LoRA down- and up-projection matrices, respectively.

\paragraph{Stage 3: Fine-tuning.}
In the final stage, we fine-tune the LoRA factors $(A, B)$ initialized from Eq.~\eqref{eq:spectral_init} using the standard flow-matching objective, without any further modification to the timestep distribution:
\begin{equation}
\label{eq:training_loss}
\mathcal{L}_{\text{FM}}(\theta) = \mathbb{E}_{t \sim \mathcal{U}(0,1),\, x_0 \sim \mathcal{N}(0, I),\, x_1 \sim p_{\text{data}}} \Bigl[\bigl\| v_\theta(x_t, t) - (x_1 - x_0) \bigr\|_2^2\Bigr], \quad x_t = (1-t)x_0 + t x_1,
\end{equation}
where $v_\theta(\cdot, \cdot)$ is the flow-matching velocity predictor parameterized by the base weights and the trainable LoRA factors, $t \sim \mathcal{U}(0,1)$ is the training timestep sampled uniformly over the full interval, $x_t$ is the linear interpolant between noise $x_0$ and data $x_1$, and $x_1 - x_0$ is the target velocity field. During this stage, the base parameters are kept frozen, and only $(A, B)$ are updated.

\begin{table}[h]
\centering
\caption{\textbf{Quantitative comparison with existing methods} on different text-to-image generation tasks. The \textbf{bold} and \underline{underlined} figures represent the optimal and sub-optimal results, respectively.}
\label{tab:main_result}
\resizebox{\textwidth}{!}{
\begin{tabular}{l|l|cc|ccc|ccc}
\toprule
\multirow{2}{*}{Task} & \multirow{2}{*}{Methods / Setting} & \multicolumn{2}{c|}{Controllability} & \multicolumn{3}{c|}{Alignment} & \multicolumn{3}{c}{Image Quality} \\
 & & F1$\uparrow$ & MSE$\downarrow$ & CLIP Text$\uparrow$ & CLIP Image$\uparrow$ & DINO$\uparrow$ & FID$\downarrow$ & SSIM $\uparrow$ & PSNR $\uparrow$ \\
\midrule
\multirow{5}{*}{Dreambooth} 
    & SD3-medium                & - & - & \underline{0.224} & 0.793 & \underline{0.667} & 170.0 & - & - \\
    & \quad + Pissa             & - & - & 0.104 & 0.475 & 0.024 & 418.0 & - & - \\
    & \quad + LoRA-GA           & - & - & 0.168 & 0.699 & 0.533 & 260.0 & - & - \\
    & \quad + LoRA-One          & - & - & \underline{0.224} & \underline{0.795} & \underline{0.667} & \underline{169.1} & - & - \\
    & \quad + \ourMethod        & - & - & \textbf{0.230} & \textbf{0.805} & \textbf{0.682} & \textbf{160.3} & - & - \\
\midrule
\multirow{5}{*}{Canny} 
    & OminiControl              & 0.4763 & - & 0.2203 & 0.7085 & 0.5011 & 97.56 & \textbf{0.3702} & 8.57 \\
    & \quad + Pissa             & 0.0588 & - & 0.1466 & 0.4964 & -0.0166 & 456.62 & 0.2297 & 5.67 \\
    & \quad + LoRA-GA           & 0.4305 & - & 0.2128 & 0.6802 & 0.4552 & 108.25 & 0.2397 & 7.34 \\
    & \quad + LoRA-One          & \underline{0.4893} & - & \textbf{0.2232} & \underline{0.7214} & \underline{0.5339} & \underline{95.93} & 0.3618 & \underline{8.73} \\
    & \quad + \ourMethod        & \textbf{0.4978} & - & \underline{0.2227} & \textbf{0.7242} & \textbf{0.5471} & \textbf{95.38} & \underline{0.3311} & \textbf{8.82} \\
\midrule
\multirow{5}{*}{Depth} 
    & OminiControl              & - & \underline{904.9} & \textbf{0.2082} & \underline{0.6601} & \textbf{0.4470} & \textbf{101.88} & \textbf{0.3376} & 7.91 \\
    & \quad + Pissa             & - & 9416 & 0.1650 & 0.5233 & -0.0034 & 500.50 & 0.3140 & 8.46 \\
    & \quad + LoRA-GA           & - & 12160 & 0.0766 & 0.4959 & 0.1319 & 283.29 & 0.2333 & \textbf{9.67} \\
    & \quad + LoRA-One          & - & 1117 & 0.2022 & 0.6491 & \underline{0.3999} & \underline{119.16} & 0.3287 & 8.01 \\
    & \quad + \ourMethod        & - & \textbf{851.5} & \underline{0.2074} & \textbf{0.7587} & 0.3939 & 127.66 & \underline{0.3317} & \underline{9.45} \\
\midrule
\multirow{5}{*}{Deblur} 
    & OminiControl              & - & 89.39 & 0.2518 & 0.8083 & \underline{0.6660} & \underline{91.47} & 0.4835 & 15.99 \\
    & \quad + Pissa             & - & 91.43 & 0.2497 & 0.7932 & 0.6465 & 102.0 & 0.4731 & 15.64 \\
    & \quad + LoRA-GA           & - & 85.24 & 0.2517 & 0.8137 & 0.6575 & 91.70 & 0.4840 & 15.90 \\
    & \quad + LoRA-One          & - & \underline{69.07} & \underline{0.2528} & \underline{0.8203} & 0.6589 & 92.72 & \underline{0.4913} & \underline{16.53} \\
    & \quad + \ourMethod        & - & \textbf{55.87} & \textbf{0.2535} & \textbf{0.8328} & \textbf{0.7232} & \textbf{77.85} & \textbf{0.5224} & \textbf{17.32} \\
\bottomrule
\end{tabular}
}
\end{table}
\vspace{-1.0em}

%% file: chapters/5-expri.tex
\paragraph{Experimental Setting.}
We evaluate our method against sota methods across different image generation tasks: subject-driven generation, spatially-aligned generation, image deblurring, etc. To prove \ourMethod \ can be effective across models, we deploy \ourMethod \ on two mainstream models, FLUX-1 \cite{flux} and Stable Diffusion-3 \cite{SD3}.
Regarding dataset setting, we use the last 100,000 images in the MultiGen-20M \cite{Uni-ControlNet} dataset for training, and last 2,500 images in COCO for evaluation in Canny, Depth, and Deblur tasks.
For Dreambooth, we use the original 30 categories of images in training and generate 4 images with 25 prompts, with a total of 3,000 images for evaluation.
See more details about hyperparameter and other experiment settings in Appendix. \ref{sec:append_expr}.

We compare our method with other spectral-init methods, Pissa~\cite{pissa}, LoRA-GA~\cite{lora-ga}, and LoRA-One~\cite{lora-one}. In the spatial alignment task, we use OminiControl~\cite{tan2024ominicontrol} as the baseline method, which uses the vanilla LoRA initialization method, and deploy the above spectral-init methods.
\vspace{-0.5em}

\paragraph{Evaluation Metrics.}
We evaluate generated images along three axes. Controllability measures fidelity to the input condition: F1 score between Canny edges extracted from the generated and input images for the Canny task, and MSE between predicted and input depth maps for the Depth task. Alignment assesses semantic consistency via CLIP Text (text-image similarity), CLIP Image (image-image similarity), and DINO~\cite{dino} feature similarity against the reference. Image quality is reported in terms of FID~\cite{fid}, SSIM, and PSNR. Further details are provided in Appendix.~\ref{sec:append_expr}.
\vspace{-0.5em}

\begin{figure*}[t]
  \centering
  \includegraphics[width=\linewidth]{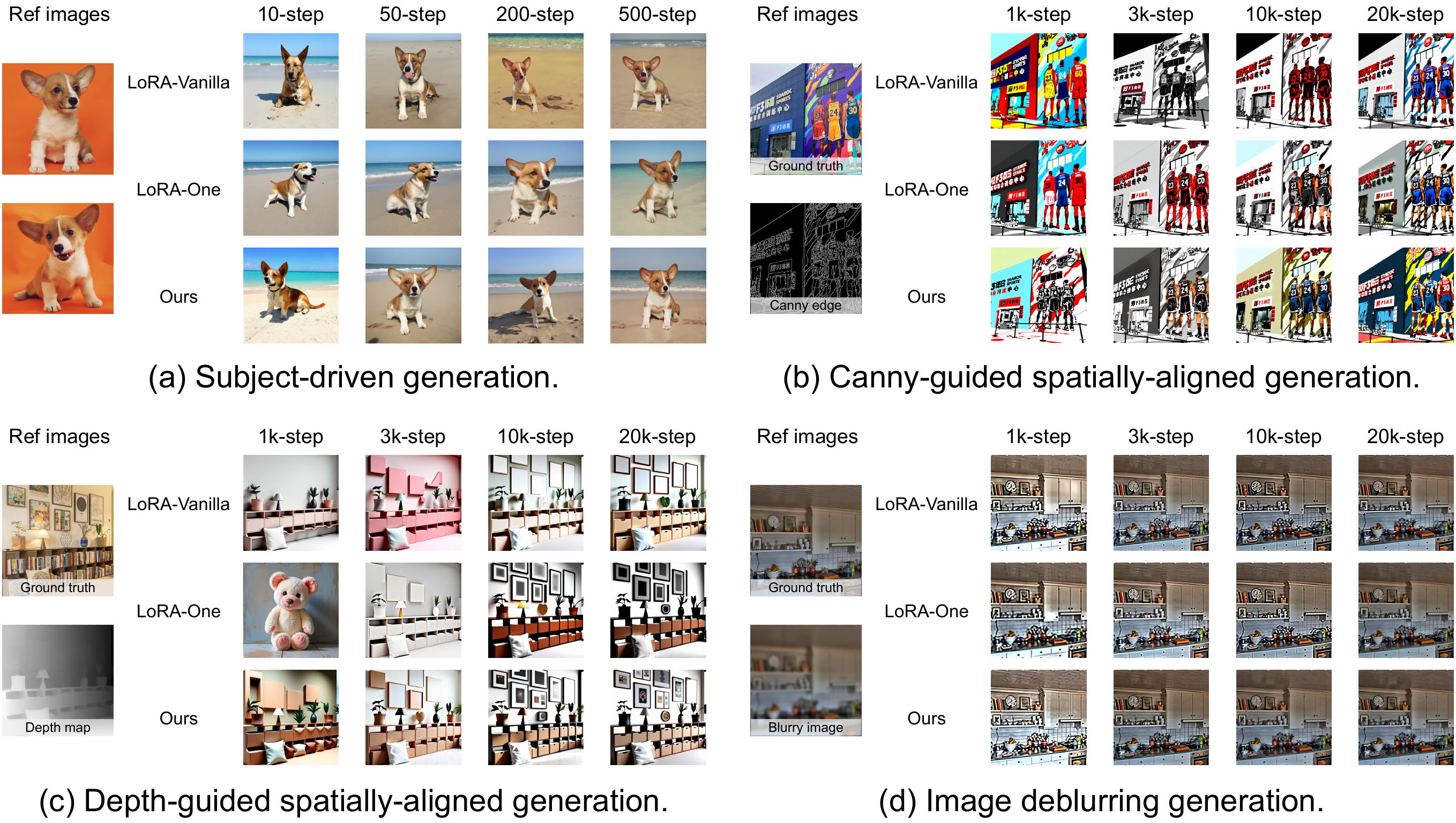}
  \caption{
  \textbf{Visual comparison with sota methods} across various text-to-image generation tasks based on LoRA fine-tuning. Our method more rapidly aligns with the target distribution, and optimizes the final generation performance. \textit{Caption: A sks dog on the beach. A high resolution picture of building. A high resolution picture of in-door decoration. A high resolution picture of kitchen}. 
  }
  \label{fig:vis}
  \vspace{-1.0em}
\end{figure*}

\subsection{Main Results}

\paragraph{Quantitative comparison.}

Table~\ref{tab:main_result} summarizes the performance across the four evaluated tasks. Our proposed PRISM-LoRA demonstrates superior controllability across all spatially-aligned and restoration tasks, achieving peak performance with an F1 score of 0.4978 on Canny, and minimizing MSE to 851.5 on Depth and 55.87 on Deblur. Notably, on the Deblur task, PRISM-LoRA consistently outperforms the OmniControl baseline across nearly all metrics, significantly improving FID from 91.47 to 77.85 and PSNR from 15.99 to 17.32. For subject-driven generation (DreamBooth), our method establishes robust leadership among LoRA variants in both semantic alignment (CLIP Text: 0.230, CLIP Image: 0.805, DINO: 0.682) and image quality (FID: 160.3), and achieve an average of 557 steps to reach best score, compared with LoRA-Base with 650, LoRA-GA 957 and LoRA-One 757 steps.
Furthermore, while improvements in alignment and fidelity on Canny and Depth are relatively modest, PRISM-LoRA remains highly competitive, consistently matching or surpassing existing spectral-init variants. It is also crucial to note that PiSSA experiences severe performance degradation on most tasks (e.g., yielding an FID of 456.62 on Canny and 500.50 on Depth). This stark contrast underscores the critical importance of gradient initialization quality and confirms that PRISM-LoRA provides a fundamentally more robust optimization starting point.


\paragraph{Qualitative comparison.}
Fig.~\ref{fig:vis} provides visual comparisons across the four downstream tasks at various training milestones. Across all settings, PRISM-LoRA exhibits a substantially accelerated convergence rate compared to both vanilla LoRA and LoRA-One. Specifically, coherent semantic structures emerge at remarkably early iterations (e.g., 10–500 steps), whereas the baselines continue to yield blurry, misaligned, or content-collapsed outputs at the equivalent stages. In the DreamBooth task, PRISM-LoRA faithfully preserves the unique identity of the reference subject (the specific dog) throughout the entire training trajectory. In contrast, vanilla LoRA suffers from concept drift toward generic dogs, and LoRA-One struggles to recover the subject identity until much later steps. For spatially-aligned generation (Canny and Depth), our model adheres more rigorously to the input conditioning signals, delivering sharper edge boundaries and highly accurate spatial layouts. Finally, in the image deblurring task, PRISM-LoRA not only restores high-frequency details earlier in the training process but also achieves perceptually cleaner reconstructions at convergence.
\vspace{-1.0em}

\begin{table}[ht]
\centering
\caption{\textbf{Ablation experiments} on Dreambooth and Canny-guided controllable generation.}
\label{tab:ablate_result}
\resizebox{0.75\textwidth}{!}{
\begin{tabular}{l|c|c|ccc}
\toprule
 
Tasks & Methods & F1$\uparrow$ & FID$\downarrow$ & CLIP-I$\uparrow$ & DINO$\uparrow$ \\
\midrule
\multirow{4}{*}{Dreambooth} 
    & LoRA-One                  & - & 169.1 & 0.795 & 0.667\\
    & w. principal timestep     & - & 161.7 & 0.798 & 0.677\\
    & w. golden channel         & - & 158.9 & 0.800 & 0.674\\
    & \ourMethod                & - & 160.3 & 0.805 & 0.682\\
\midrule
\multirow{4}{*}{Canny} 
    & LoRA-One                  & 0.4893 & 95.93 & 0.7214 & 0.5339\\
    & w. principal timestep     & 0.4834 & 95.82 & 0.7135 & 0.5351\\
    & w. golden channel         & 0.4912 & 97.14 & 0.7158 & 0.5387\\
    & \ourMethod                & 0.4978 & 95.38 & 0.7242 & 0.5471\\
\bottomrule

\end{tabular}
}
\end{table}
\vspace{-0.5em}

\subsection{Ablation Study}


To isolate the individual contributions of our proposed modules—Principal Timestep selection and Golden Channel filtering—we evaluate their independent and combined effects on the Dreambooth (subject-driven generation) and Canny (spatially-aligned generation) tasks. 
As observed in Table.~\ref{tab:ablate_result}, integrating either the principal timestep or the golden channel independently into the strong LoRA-One baseline yields overall performance improvements, particularly in terms of identity preservation and image quality. On the Dreambooth task, applying the Golden Channel alone improves the DINO score from 0.667 to 0.674 and substantially reduces FID, suggesting that filtering out task-irrelevant channels is crucial for maintaining optimization stability even under stochastic timestep sampling.

While individual components exhibit distinct merits, their combination in the full PRISM-LoRA framework achieves the optimal performance balance. Interestingly, on the Canny task, while applying independent modules causes slight fluctuations in specific metrics, the complete PRISM-LoRA clearly dominates, achieving the highest F1 score (0.4978) and DINO score (0.5471). This demonstrates a strong synergistic effect: suppressing the effective gradient rank (via Principal Timestep) creates a cleaner spectral foundation, which enables the sparse decomposition (via Golden Channel) to isolate the optimal fine-tuning directions more accurately. Together, they successfully mitigate the high-rank gradient dilemma inherent in flow-matching diffusion models.
\vspace{-0.5em}

%% file: chapters/6-concl.tex

\vspace{-0.5em}
In this paper, we identify and resolve a fundamental mismatch between the low-rank assumption of standard spectral initialization and the intrinsically high-rank gradient geometry found in flow-matching diffusion models. We demonstrate that stochastic timestep sampling induces directionally heterogeneous gradients, which renders conventional low-rank approximations lossy and misaligned. To overcome this barrier, we propose PRISM-LoRA, a principled spectral initialization framework specifically tailored for diffusion fine-tuning. By synergizing Principal Timestep selection to inherently suppress the effective gradient rank, and Golden Channel sparse decomposition to filter out task-irrelevant noise, PRISM-LoRA ensures that the initialized parameters robustly align with the long-horizon optimization trajectory.

\paragraph{Limitation and future works} While our approach achieves both faster convergence and strong performance on this task, a more comprehensive validation across diverse subject-driven scenarios remains an interesting direction for future work. On the theoretical side, our analysis primarily focuses on the observation that oversampling intermediate timesteps can be harmful to gradient quality. However, these timesteps also exhibit the largest variance, suggesting that they carry informative gradient signals that should be properly trained and accelerated rather than down-weighted. Extending our theory to characterize the spectral behavior across the full range of diffusion timesteps is a promising avenue for future investigation.

%% file: chapters/append.tex
\clearpage

\section*{Appendix}

\section{Preliminaries}
\label{sec:supp_prelim}
\input{chapters/3-Prelim}

\section{Detailed proof of principal timesteps}
\label{supp:sec:golden_timesteps}

\subsection{Problem Formulation}
\label{supp:sec:problem_formulation}
Consider the linear interpolation $x_t = (1-t)x_0 + t x_1$, where $x_0 \sim \mathcal{N}(0, I_d)$ is independent of $x_1$. Throughout this section, we adopt the \emph{Gaussian surrogate assumption} $x_1 \sim \mathcal{N}(0, \Sigma_1)$, which yields closed-form conditional covariances; for non-Gaussian $p_{\text{data}}$ the expressions below serve as the best linear-Gaussian approximation of $\operatorname{Var}[x_0 \mid x_t]$. Let the eigenvalues of $\Sigma_1$ be
\[
\lambda_1 \geq \lambda_2 \geq \cdots \geq \lambda_d > 0,
\]
and assume $\lambda_1 > \lambda_d$ (non-degenerate spectrum). The (rescaled) gradient covariance matrix is defined as
\begin{equation}
\label{eq:var_t}
    \Sigma_v(t) = \frac{1}{t^2}\,\operatorname{Var}[x_0 \mid x_t].
\end{equation}
To obtain Eq.~\ref{eq:var_t}, we need to first make a simple conversion for $x_1$.
According to the flow-matching identity, we can express $x_1$ as $x_1 = \frac{x_t - (1-t)x_0}{t}$.
As a result, the model prediction $v(x_t,t)$ can be express as 
\begin{equation}
    v(x_t,t)=\mathbb{E}[x_1-x_0 \mid x_t]=\mathbb{E} \left[ \frac{x_t - x_0}{t} \mid x_t \right] = \frac{x_t - \mathbb{E}[x_0 \mid x_t]}{t}.
\end{equation}
From this expression, we observe that the velocity field is determined by posterior expectation.
Finally, with the irreducible noise $x_1-x_0-v(x_t,t)=\frac{\mathbb{E}[x_0 \mid x_t]-x_0}{t}$, the covariance of the irreducible noise is $\operatorname{Cov} \left( \frac{\mathbb{E}[x_0 \mid x_t] - x_0}{t} \mid x_t \right) = \frac{1}{t^2} \operatorname{Var}[x_0 \mid x_t]$.

\paragraph{Derivation of the spectrum.}
Because $x_0 \perp x_1$, the marginal covariance of $x_t$ is
$\operatorname{Cov}(x_t) \;=\; (1-t)^2 I_d + t^2 \Sigma_1,$
and the cross-covariance is $\operatorname{Cov}(x_0, x_t) = (1-t) I_d$. Under the joint-Gaussian assumption, the conditional covariance is
\begin{equation}
\operatorname{Var}[x_0 \mid x_t] \;=\; I_d - (1-t)^2 \bigl[(1-t)^2 I_d + t^2 \Sigma_1\bigr]^{-1}.
\end{equation}
Diagonalizing in the eigenbasis of $\Sigma_1$, the $i$-th eigenvalue of $\operatorname{Var}[x_0 \mid x_t]$ equals $\frac{t^2 \lambda_i}{(1-t)^2 + t^2 \lambda_i}$, so the eigenvalues $\sigma_i(t)$ of $\Sigma_v(t)$ satisfy
\begin{equation}
\label{eq:sigma_i_t}
\sigma_i(t) \;=\; \frac{\lambda_i}{(1-t)^2 + t^2 \lambda_i}.
\end{equation}

\subsection{Spectral Comparison at $t$ in $\{0, 0.5, 1\}$}
\label{supp:sec:spectial_t_0}

Evaluating \eqref{eq:sigma_i_t} at characteristic timesteps:
\begin{itemize}
    \item \textbf{At $t \to 0$:} $\sigma_i(0) = \lambda_i$. The spectrum coincides with the data covariance $\Sigma_1$.
    \item \textbf{At $t = 0.5$:} $\sigma_i(0.5) = \dfrac{4\lambda_i}{1+\lambda_i}$.
    \item \textbf{At $t = 1$:} $\sigma_i(1) = 1$ for all $i$, so $\Sigma_v(1) = I_d$. Intuitively, when $x_t = x_1$ carries no information about $x_0$, the conditional variance reduces to the prior $I_d$ and the gradient direction is dominated entirely by noise.
\end{itemize}

\subsection{Majorization Lemma}
\label{supp:sec:spectial_t_1}
We first establish a general majorization result from which the three-point comparison (and the full-range monotonicity) will follow.

\begin{lemma}[Normalized-spectrum majorization]
\label{lem:majorization}
Let $\lambda = (\lambda_1, \dots, \lambda_d)$ with $\lambda_1 \geq \cdots \geq \lambda_d > 0$ and $\lambda_1 > \lambda_d$. Let $f:(0,\infty) \to (0,\infty)$ be strictly increasing, strictly concave, and satisfy $f(0^+) = 0$. Define the normalized spectra
\[
\hat{a}_i \;=\; \frac{\lambda_i}{\sum_j \lambda_j},
\qquad
\hat{b}_i \;=\; \frac{f(\lambda_i)}{\sum_j f(\lambda_j)}.
\]
Then $\hat{a} \succ \hat{b}$ (strictly), i.e.
\[
\sum_{i=1}^k \hat{a}_i \;\geq\; \sum_{i=1}^k \hat{b}_i \quad \forall k \in \{1,\dots,d\},
\]
with strict inequality for at least one $k < d$.
\end{lemma}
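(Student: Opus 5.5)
The key fact is that a strictly concave $f$ with $f(0^+)=0$ has a strictly decreasing ratio $g(x)=f(x)/x$ on $(0,\infty)$. To see this, extend $f$ by $f(0)=0$; the extension stays concave on $[0,\infty)$. For $0<x<y$, write $x=\tfrac{x}{y}\,y+\bigl(1-\tfrac{x}{y}\bigr)\cdot 0$. Strict concavity then gives $f(x)>\tfrac{x}{y}f(y)$, which is exactly $g(x)>g(y)$. Since $f$ is increasing, the ordering $\lambda_1\ge\cdots\ge\lambda_d$ carries over to $f(\lambda_1)\ge\cdots\ge f(\lambda_d)$. So both normalized vectors are sorted in decreasing order, and comparing their partial sums is the correct majorization test.

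With $w_i=g(\lambda_i)$ we have $f(\lambda_i)=w_i\lambda_i$ and $0<w_1\le w_2\le\cdots\le w_d$. Let $S_k=\sum_{i\le k}\lambda_i$, $S=S_d$, $F_k=\sum_{i\le k}w_i\lambda_i$ and $F=F_d$. The claim $\hat a$ partial sums $\ge\hat b$ partial sums is then $S_k/S\ge F_k/F$, or equivalently
\[
S_k\,(F-F_k)\;\ge\;F_k\,(S-S_k).
\]
I will compare both sides through the threshold $w_k$. On the head, $F_k\le w_k S_k$, because $w_i\le w_k$ for $i\le k$. On the tail, $F-F_k\ge w_{k+1}(S-S_k)\ge w_k(S-S_k)$. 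Multiplying, $S_k(F-F_k)\ge w_kS_k(S-S_k)\ge F_k(S-S_k)$, which proves the inequality for every $k<d$. The case $k=d$ is an equality.

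The step needing care is strictness, since the lemma requires a strict inequality for at least one $k<d$. Because $\lambda_1>\lambda_d$, there is an index $k<d$ with $\lambda_k>\lambda_{k+1}$. Strict monotonicity of $g$ then gives $w_k<w_{k+1}$. With $S-S_k>0$, this makes the tail bound $F-F_k\ge w_{k+1}(S-S_k)>w_k(S-S_k)$ strict. Multiplying by $S_k>0$ yields a strict inequality at this $k$.

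The main obstacle is making sure strict concavity together with $f(0^+)=0$ really gives strict decrease of $g$, which is the first step above; I expect the rest to be routine bookkeeping. One caveat is worth flagging for the later use in the theorem. Majorization feeds into $r_N$ only as a monotonicity statement: if every partial sum of $\hat b$ lies below the corresponding partial sum of $\hat a$, then reaching the $N\%$ threshold takes at least as many terms under $\hat b$, so $r_N(\hat b)\ge r_N(\hat a)$. This argument goes through directly from the partial-sum inequalities, so it does not depend on $r_N$ being Schur-concave in any stronger sense.
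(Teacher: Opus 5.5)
Your proof is correct. It shares the paper's key fact but finishes the argument differently. Both proofs rest on $f(x)/x$ being strictly decreasing, equivalently $\lambda/f(\lambda)$ strictly increasing. The paper gets this from the derivative, $f(\lambda)>\lambda f'(\lambda)$, which tacitly assumes $f$ is differentiable. Your chord argument through the extension $f(0)=0$ uses only the stated hypotheses.

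From there the paper argues globally. The ratio $\hat a_i/\hat b_i$ is non-increasing and non-constant, so $\hat a_i-\hat b_i$ is nonnegative up to a crossing index $i^\star$ and nonpositive afterwards. Since both vectors sum to $1$, every partial sum of $\hat a-\hat b$ is nonnegative: head sums termwise, and later sums as minus a nonpositive tail. Strictness holds at $k=i^\star$.

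You instead treat each $k$ separately. You bound the head from above by $w_kS_k$ and the tail from below by $w_{k+1}(S-S_k)$, then cross-multiply. This avoids the crossing-index bookkeeping and gives strictness at any $k$ with $\lambda_k>\lambda_{k+1}$. The paper's single-crossing route is the classical one and shows plainly that a monotone ratio of entries is what forces majorization. Yours is more self-contained and needs no smoothness.

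One small repair is needed in your first step. You only establish that the extension is concave on $[0,\infty)$, but then invoke \emph{strict} concavity at the endpoint $0$, where it was not assumed. You can close this as follows. Concavity of the extension already gives $f(x)\ge \tfrac{x}{y}f(y)$, so $g$ is non-increasing. If $g(x)=g(y)$ for some $0<x<y$, then $g$ is constant on $[x,y]$, so $f$ is linear there. That contradicts strict concavity on $(0,\infty)$. The paper's footnote has the same gap: letting $\epsilon\to 0^+$ in the tangent-line inequality only gives $f(\lambda)\ge\lambda f'(\lambda)$.
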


\begin{proof}
Both $\hat{a}$ and $\hat{b}$ sum to $1$, and since $f$ is strictly increasing, both are non-increasing in $i$ (same ordering). Consider the ratio
\begin{equation}
\label{eq:ratio}
\frac{\hat{a}_i}{\hat{b}_i} \;=\; \frac{\lambda_i}{f(\lambda_i)} \cdot \frac{\sum_j f(\lambda_j)}{\sum_j \lambda_j}.
\end{equation}
Strict concavity of $f$ together with $f(0^+)=0$ implies that $\lambda \mapsto \lambda/f(\lambda)$ is strictly increasing on $(0,\infty)$.\footnote{Indeed, $\frac{d}{d\lambda}\bigl(\lambda/f(\lambda)\bigr) = \bigl(f(\lambda)-\lambda f'(\lambda)\bigr)/f(\lambda)^2$; strict concavity and $f(0^+)=0$ give $f(\lambda) > \lambda f'(\lambda)$ for all $\lambda>0$.} Since $\lambda_i$ is non-increasing in $i$, the ratio $\hat{a}_i / \hat{b}_i$ is non-increasing in $i$; because $\lambda_1 > \lambda_d$, it is not constant.

\smallskip
\emph{Cut-property argument.} Since $\hat{a}_i/\hat{b}_i$ is non-increasing and non-constant, and $\sum_i \hat{a}_i = \sum_i \hat{b}_i = 1$, there exists a unique index $i^\star \in \{1,\dots,d-1\}$ such that
\[
\hat{a}_i \geq \hat{b}_i \text{ for } i \leq i^\star,
\qquad
\hat{a}_i \leq \hat{b}_i \text{ for } i > i^\star.
\]
For $k \leq i^\star$, each summand in $\sum_{i=1}^k(\hat{a}_i - \hat{b}_i)$ is non-negative, so the partial sum is $\geq 0$. For $k > i^\star$, using $\sum_{i=1}^d(\hat{a}_i - \hat{b}_i) = 0$,
\begin{equation}
    \sum_{i=1}^k(\hat{a}_i - \hat{b}_i) \;=\; -\!\!\sum_{i=k+1}^d(\hat{a}_i - \hat{b}_i) \;\geq\; 0,
\end{equation}

since each term in the tail sum is $\leq 0$. Hence $\sum_{i=1}^k \hat{a}_i \geq \sum_{i=1}^k \hat{b}_i$ for all $k$, with strict inequality at $k = i^\star$.
\end{proof}

\subsection{Full-Range Monotonicity of Spectral Flatness}
\label{supp:sec:transition-map}
We now promote the three-point comparison to full-range monotonicity in $t$.

\begin{lemma}[Monotone majorization along $t$]
\label{lem:full_range}
For any $0 \leq s < t \leq 1$, let $\hat{\sigma}(s)$ and $\hat{\sigma}(t)$ denote the normalized spectra of $\Sigma_v(s)$ and $\Sigma_v(t)$ respectively. Then $\hat{\sigma}(s) \succ \hat{\sigma}(t)$.
\end{lemma}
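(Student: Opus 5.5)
The plan is to reduce the statement to Lemma~\ref{lem:majorization} applied to the transition map $h_{s\to t}$ of Eq.~\eqref{eq:transition-map}. The key is to work with reciprocals, where Eq.~\eqref{eq:sigma_i_t} becomes affine in $1/\lambda_i$:
\[
\frac{1}{\sigma_i(t)} \;=\; \frac{(1-t)^2}{\lambda_i} + t^2 .
\]
We use the convention $\sigma_i(0)=\lambda_i$, which this formula reproduces at $t=0$.

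\textbf{Main case, $0\le s<t<1$.} First eliminate $\lambda_i$ between the two timesteps. Solving at time $s$ gives $1/\lambda_i = \bigl(1/\sigma_i(s)-s^2\bigr)/(1-s)^2$. Substituting into the expression at time $t$ gives
\[
\frac{1}{\sigma_i(t)} \;=\; c\Bigl(\frac{1}{\sigma_i(s)} - s^2\Bigr) + t^2,
\qquad c \;=\; \Bigl(\frac{1-t}{1-s}\Bigr)^2 \in (0,1).
\]
Hence $\sigma_i(t)=h(\sigma_i(s))$ with the explicit Möbius map
\[
h(x) \;=\; \frac{x}{c+kx}, \qquad k \;=\; t^2-cs^2 .
\]
This closed form replaces the abstract composition $f_t\circ f_s^{-1}$, and the required properties can be read off directly.

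The crucial check is $k>0$. It is equivalent to $t(1-s)>s(1-t)$, which is exactly $t>s$. Given $c>0$ and $k>0$, the map $h$ sends $(0,\infty)$ to $(0,\infty)$ and satisfies $h(0^+)=0$. It is strictly increasing, since $h'(x)=c/(c+kx)^2>0$. It is strictly concave, since $h''(x)=-2ck/(c+kx)^3<0$.

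Next, verify the hypotheses on the input sequence. Because $f_s$ is strictly increasing, the values $\sigma_1(s)\ge\cdots\ge\sigma_d(s)>0$ keep the ordering of the $\lambda_i$. The non-degeneracy $\lambda_1>\lambda_d$ transfers to $\sigma_1(s)>\sigma_d(s)$. Applying Lemma~\ref{lem:majorization} with input $\sigma(s)$ and map $h$ then yields $\hat\sigma(s)\succ\hat\sigma(t)$, strictly.

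\textbf{Endpoint $t=1$.} Here $c=0$ and $h$ degenerates to a constant, so Lemma~\ref{lem:majorization} does not apply. We handle it directly: $\Sigma_v(1)=I_d$, so $\hat\sigma(1)=(1/d,\dots,1/d)$. The uniform vector is majorized by every probability vector, because the top-$k$ partial sums of any non-increasing probability vector are at least $k/d$. The majorization is strict since $\hat\sigma(s)$ is non-uniform. The case $s=0$ needs no special treatment: it is covered by the main case with $c=(1-t)^2$.

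The main obstacle I expect is the positivity of $k$, and with it the sign of $h''$. This is the only place the ordering $s<t$ enters, and a sign error there would reverse the conclusion. A secondary point is that the lemma's strictness claim and its domain $(0,\infty)$ must be matched carefully. The point $t=1$ fails the lemma's hypotheses, which is why it is peeled off and treated separately.
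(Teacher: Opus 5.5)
Your proposal is correct and follows the paper's route. Both arguments reduce to Lemma~\ref{lem:majorization} via the transition map $g_t\circ g_s^{-1}$ in closed M\"obius form: your $x/(c+kx)$ is the paper's $\alpha\mu/((1-t)^2+\beta\mu)$ divided through by $(1-s)^2$, with $k>0 \iff \beta>0 \iff t>s$, and the reciprocal substitution is simply a cleaner way to derive it. Your separate treatment of $t=1$ is in fact necessary: there the paper's constant $(1-t)^2$ vanishes, so its transition map (and $g_1$ in its $s=0$ case) is constant and Lemma~\ref{lem:majorization} does not apply, whereas your direct argument that the uniform vector $\hat\sigma(1)$ is strictly majorized by any non-uniform probability vector handles that endpoint correctly.
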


\begin{proof}
Define, for fixed $t \in [0,1]$, the map $g_t:(0,\infty) \to (0,\infty)$ by $g_t(\lambda) = \frac{\lambda}{(1-t)^2 + t^2 \lambda}$, so that $\sigma_i(t) = g_t(\lambda_i)$. A direct computation gives
\[
g_t'(\lambda) = \frac{(1-t)^2}{[(1-t)^2 + t^2\lambda]^2} > 0,
\qquad
g_t''(\lambda) = -\frac{2 t^2 (1-t)^2}{[(1-t)^2 + t^2\lambda]^3} \leq 0,
\]
so $g_t$ is strictly increasing and (for $t \in (0,1)$) strictly concave; moreover $g_t(0^+) = 0$.

For $s = 0$ and any $t \in (0,1]$, the claim $\hat{\sigma}(0) \succ \hat{\sigma}(t)$ follows immediately from Lemma~\ref{lem:majorization} applied with $f = g_t$.

For general $0 < s < t \leq 1$, write $\sigma_i(t) = (g_t \circ g_s^{-1})(\sigma_i(s))$. Let $h = g_t \circ g_s^{-1}$. Both $g_t$ and $g_s^{-1}$ are strictly increasing, so $h$ is strictly increasing; and the composition of a concave increasing function with an increasing concave inverse (i.e. with a convex $g_s^{-1}$ — note that the inverse of a concave increasing function is convex increasing) requires care. A direct calculation yields
\begin{equation}
\begin{aligned}
    h(\mu) &= g_t\!\left(\frac{(1-s)^2 \mu}{1 - s^2\mu}\right)  \\
    &= \frac{(1-s)^2\, \mu}{(1-t)^2(1 - s^2 \mu) + t^2(1-s)^2 \mu} \\
    &= \frac{(1-s)^2\, \mu}{(1-t)^2 + \bigl[t^2(1-s)^2 - s^2(1-t)^2\bigr]\mu}.
\end{aligned}
\end{equation}
Setting $\alpha = (1-s)^2 > 0$ and $\beta = t^2(1-s)^2 - s^2(1-t)^2$, we have $h(\mu) = \frac{\alpha\mu}{(1-t)^2 + \beta\mu}$. For $s < t \leq 1$ one checks that $\beta > 0$:\footnote{$\beta = [t(1-s) - s(1-t)][t(1-s) + s(1-t)] = (t-s)[t(1-s)+s(1-t)] > 0$ for $0 \leq s < t \leq 1$.} hence $h$ is of the same Möbius form $\frac{\alpha\mu}{c + \beta\mu}$ with $c, \alpha, \beta > 0$, so $h$ is strictly increasing, strictly concave, and $h(0^+) = 0$.

\smallskip
Applying Lemma~\ref{lem:majorization} to the spectrum $\sigma(s)$ with transformation $f = h$ gives $\hat{\sigma}(s) \succ \hat{\sigma}(t)$.
\end{proof}

\subsection{$r_{N}$ Flatness}

Define the $r_{N}$ metric
$
r_{N}(\hat{x})=\min{r : \sum_{i=1}^r \hat{x}_{(i)} \geq N/100 },$

where $\hat{x}_{(1)} \geq \hat{x}_{(2)} \geq \cdots$ is the sorted spectrum. A standard consequence of majorization is that if $\hat{a} \succ \hat{b}$ then $r_{N}(\hat{a}) \leq r_{N}(\hat{b})$, since larger partial sums reach the $N/100$ threshold no later.

Combining this with Lemma~\ref{lem:full_range} yields:

\begin{theorem}[Monotone spectral flattening]
\label{thm:r95_monotone}
Under the assumptions of Section~\ref{supp:sec:golden_timesteps} (Gaussian surrogate, $\lambda_1 > \lambda_d$), the $r_{N}$ metric is non-decreasing in $t$ on $[0,1]$: for all $0 \leq s < t \leq 1$,
$
r_{N}(\Sigma_v(s)) \leq r_{N}(\Sigma_v(t)).
$
In particular,
\begin{equation}
\label{eq:r95_chain}
r_{N}\bigl(\Sigma_v(0)\bigr) \leq r_{N}\bigl(\Sigma_v(0.5)\bigr)< r_{N}\bigl(\Sigma_v(1)\bigr) = \lceil N/100\, d \rceil,
\end{equation}
where the strict inequality $r_{N}(\Sigma_v(0.5)) < r_{N}(\Sigma_v(1))$ holds because $\hat{\sigma}(1) = (1/d,\dots,1/d)$ is the uniform distribution, which is strictly majorized by any non-uniform $\hat{\sigma}(0.5)$ whenever $\lambda_1 > \lambda_d$.
\end{theorem}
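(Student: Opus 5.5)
The plan is to reduce Theorem~\ref{thm:r95_monotone} to Lemma~\ref{lem:full_range} together with the monotonicity of $r_N$ under majorization. We keep the Gaussian surrogate and the closed-form spectrum \eqref{eq:sigma_i_t}, so that $\sigma_i(t)=g_t(\lambda_i)$ with the ordering of the $\sigma_i(t)$ matching that of the $\lambda_i$ for every $t$.

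\textbf{Step 1: monotonicity under majorization.} Fix $0\le s<t\le 1$. Lemma~\ref{lem:full_range} gives $\hat\sigma(s)\succ\hat\sigma(t)$. Both normalized spectra are already sorted in non-increasing order, because $g_s$ and $g_t$ are increasing. Their partial sums therefore satisfy $S_k(s)\ge S_k(t)$ for every $k$. Let $k^\star=r_N(\hat\sigma(t))$, so that $S_{k^\star}(t)\ge N/100$. Then $S_{k^\star}(s)\ge N/100$ as well, and hence $r_N(\hat\sigma(s))\le k^\star$. This gives the first inequality of \eqref{eq:r95_chain} by taking $s=0$, $t=0.5$.

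\textbf{Step 2: the $t=0$ endpoint.} The formula $g_0(\lambda)=\lambda$ is not covered by the strict-concavity hypothesis of Lemma~\ref{lem:majorization}. At $s=0$ we use $\sigma(0)=\lambda$ directly and apply Lemma~\ref{lem:majorization} with $f=g_t$. At $t=1$, $g_1\equiv 1$ is constant rather than strictly increasing. The majorization $\hat\sigma(s)\succ(1/d,\dots,1/d)$ is then trivial, since every vector in the simplex majorizes the uniform vector.

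\textbf{Step 3: the right-hand endpoint.} For the uniform spectrum the partial sums are $S_k=k/d$, so $r_N(\Sigma_v(1))=\lceil Nd/100\rceil$. This is the intended reading of the paper's $\lceil N/100, d\rceil$.

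\textbf{Step 4: the strict inequality (main obstacle).} The claim $r_N(\Sigma_v(0.5))<r_N(\Sigma_v(1))$ is where I expect the real difficulty. Strict majorization does not by itself imply strict inequality of an integer-valued threshold index. The natural route is as follows. Set $k=\lceil Nd/100\rceil-1$. For the non-uniform sorted vector $\hat\sigma(0.5)$, we have $S_k(0.5)>k/d$ whenever $1\le k<d$, because the top-$k$ average of a non-constant sorted vector exceeds the overall mean. We then need $S_k(0.5)\ge N/100$.

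This last inequality holds only if the gap $S_k(0.5)-k/d$ exceeds $N/100-k/d$, and that quantity can be as large as nearly $1/d$. So in general I expect the strict inequality to fail. A concrete counterexample is $d=2$ with $\lambda$ close to $(1+\epsilon,1)$ and $N=95$: here both indices equal $2$.

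My plan is therefore to prove the non-strict chain rigorously. The strict version would then be stated under an explicit extra condition, for instance $S_{\lceil Nd/100\rceil-1}(\hat\sigma(0.5))\ge N/100$. This condition holds for sufficiently spread spectra, since as $\lambda_1/\lambda_d$ grows, the values $\hat\sigma(0.5)$ concentrate on the top indices.
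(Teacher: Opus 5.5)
For the non-strict part you follow the paper's route exactly. The paper also gets $\hat\sigma(s)\succ\hat\sigma(t)$ from Lemma~\ref{lem:full_range} and then notes that larger sorted partial sums reach the $N/100$ threshold no later, which is your Step~1. Your endpoint treatment is tighter than the paper's. At $t=1$ one has $(1-t)^2=0$, so $g_1$ and the transition map $h(\mu)=\alpha\mu/\bigl((1-t)^2+\beta\mu\bigr)$ both collapse to the constant $1$. That fails every hypothesis of Lemma~\ref{lem:majorization}, yet the paper applies the lemma for all $t\in(0,1]$ and asserts $c,\alpha,\beta>0$. Your remark that every point of the simplex majorizes the uniform vector is the right repair.

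On Step~4 you are right and the paper is not. Its only justification for $r_N(\Sigma_v(0.5))<r_N(\Sigma_v(1))$ is that $\hat\sigma(0.5)$ strictly majorizes the uniform vector. That is exactly the inference you flag: an integer-valued threshold index is only weakly order-reversing under majorization. Your counterexample is valid: $\hat\sigma_1(0.5)\approx 1/2+\epsilon/8$, so both indices equal $2$ at $N=95$. The chain therefore holds only with $\le$ in the middle, which is what you prove.

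One correction to your closing heuristic. A large condition number $\lambda_1/\lambda_d$ does not by itself make your extra condition hold, because $g_{0.5}(\lambda)=4\lambda/(1+\lambda)<4$ saturates. This gives $\sigma_1(0.5)/\sigma_d(0.5)<(1+\lambda_d)/\lambda_d$. For example, $d=2$ with $\lambda=(\Lambda,1)$ gives $\hat\sigma_1(0.5)<2/3$ for every $\Lambda$, so strictness fails at $N=95$ however spread $\lambda$ is. What is actually needed is tail eigenvalues that are small relative to the unit noise variance.

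Note also that your condition $S_{\lceil Nd/100\rceil-1}(\hat\sigma(0.5))\ge N/100$ is equivalent to the strict inequality. It is a restatement of the conclusion rather than an independent sufficient hypothesis.
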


This proves that the gradient covariance becomes progressively flatter as $t$ moves from the data distribution toward the noise: small $t$ (the ``head'' of the sampling trajectory) admits a sharper low-rank structure in its gradient spectrum, whereas large $t$ (the ``tail'') approaches an isotropic spectrum and is inherently high-rank. This monotonicity motivates treating small-$t$ timesteps as the \emph{golden timesteps} for low-rank gradient-based initialization.

\section{Experimental Settings and Additional Results}
\label{sec:append_expr}
\paragraph{Evaluation Metrics}
We evaluate our method from three complementary perspectives: controllability, alignment, and image quality. For \textbf{controllability}, we measure how faithfully the generated images respect the input control signal. On the Canny task, we extract Canny edges from the generated images and compute the F1 score against the input edge map, where higher values indicate better edge preservation. On the Depth task, we estimate the depth map of the generated images using a pretrained depth predictor and report the Mean Squared Error (MSE) against the input depth condition, where lower values indicate more accurate geometric alignment.

For \textbf{alignment}, we assess semantic consistency from both textual and visual perspectives. \textit{CLIP Text} computes the cosine similarity between CLIP embeddings of the generated image and the text prompt, reflecting text-image semantic alignment. \textit{CLIP Image} measures the cosine similarity between CLIP embeddings of the generated and reference images, capturing high-level visual consistency. \textit{DINO} reports the cosine similarity in the DINO~\cite{dino} feature space, which is more sensitive to fine-grained structural and identity-level correspondence than CLIP.

For \textbf{image quality}, we adopt three widely-used metrics, and calculated by pyiqa~\cite{pyiqa}. \textit{FID} measures the distributional distance between generated and real images in the Inception feature space, with lower values indicating more realistic generations. \textit{SSIM} evaluates structural similarity between the generated image and the reference at the pixel level, accounting for luminance, contrast, and structural information. \textit{PSNR} measures pixel-wise reconstruction fidelity in decibels, where higher values denote less distortion. Together, these metrics provide a comprehensive assessment of perceptual realism, structural consistency, and reconstruction accuracy.

\subsection{Implementation Details}
In this section, we briefly introduce the datasets, training details, and evaluation metrics used in experiments.

\noindent\textbf{Hyperparameters setting}
We implement our method on top of two baselines. For the OminiControl~\cite{tan2024ominicontrol} baseline (Canny-to-image, depth-to-image, and deblurring), we follow the original setup: FLUX.1 as the base DiT, LoRA rank 4, effective batch size 8 (batch size 1 with gradient accumulation of 8), prodigy optimizer with weight decay 0.01, and $512\times512$ resolution. Training uses the last 300,000 images of text-to-image-2M, with Canny/depth maps and Gaussian blur generated on the fly. For the DreamBooth~\cite{ruiz2023dreambooth} baseline we use Stable Diffusion 3-medium with the default DreamBooth setting: 30 subjects with 3--5 reference images each, prompts of the form \textit{``a [sks] [class noun]''}, class-specific prior-preservation loss with $\lambda = 1$ ($\sim$1000 class samples), and learning rate $5\times10^{-6}$. Training iterations for each LoRA variant are reported in Table~\ref{tab:main_result}.

For our method hyperparameters, we use 1000 to 800 and 100 to 0 as our timestep selection to better fit the low-rank hypothesis. For golden channel, we set the $L_1$ normalized bound on 1, indicating the most sparse sampling situation. For rank selection after sparse iteration, we select the top-2 rank of singular values to remain instead of being set to 0.

We deploy our models on NVIDIA L40S GPU, with 48 gigabytes of VRAM. For dreambooth, the total training time is 25 minutes per class; for the subject-alignment task, we use 2 L40S and train for almost 20 hours.

%% file: chapters/3-prelim.tex
\subsection{Flow Matching Models}
Prior diffusion models based on denoising diffusion probabilistic models (DDPMs) \cite{ddpm} learn generative processes by gradually adding Gaussian noise to the data and training a model to reverse this stochastic diffusion process. In contrast, Flow Matching (FM) \cite{liu2022flow} formulates generative modeling as learning a continuous-time velocity field that deterministically transports samples from a simple prior distribution to the data distribution. Let $x_0 \sim p_0(x)$ and $x_1 \sim q(x)$ denote noise and data samples. The interpolation path is as follows:
\begin{equation}
\label{eq:prem_xt}
    x_t = (1 - t)x_0 + t x_1.
\end{equation}
As above, a vector field $v_\theta(x,t)$—parameterized by a neural network, is constructed to approximate a target vector field $v_t(x)$, which transports a simple prior distribution $p_0(x)$ to a complex data distribution $p_1(x)\approx q(x)$ via an ordinary differential equation (ODE): $\frac{d}{dt}\phi_t(x)=v_t(\phi_t(x))$ with $\phi_0(x)=x$. The ideal marginal FM objective is defined as:
\begin{equation}
    \mathcal{L}(\theta)=\mathbb{E}_{t\sim \mathcal{U}[0,1],x\sim p_t(x)}\left [\left \| v_\theta (x,t)-v_t(x) \right \|^2_2 \right].
\end{equation}

However, the marginal probability path $p_t(x)$ and the vector field $v_t(x)$ are generally unknown. To bypass this intractability, CFM \cite{lipman2022flow} constructs the target vector field by conditioning on individual data samples $x_1 \sim q(x_1)$. Given a conditional probability path $p_t(x|x_1)$ and its generating vector field $v_t(x|x_1)$, the tractable CFM objective is:
\begin{equation}
\begin{aligned}
    \mathcal{L}^*(\theta)&=\mathbb{E}_{t,x_1\sim q(x_1),x\sim p_t(x|x_1)}\left [\left \| v_\theta (x,t)-v_t(x|x_1) \right \|^2_2 \right] \\
    &=\mathbb{E}_{t,x1\sim q(x_1),x\sim p_t(x|x_1)}\left [\left \| v_\theta ((1-t)x_0+tx_1,t)-(x_1-x_0) \right \|^2_2 \right] 
\end{aligned}
\end{equation}
Minimizing $\mathcal{L}^*_\theta$ is mathematically equivalent to minimizing the intractable $\mathcal{L_\theta}$ up to a constant. The inference process solves the ODE backward from $t=0$ to $t=1$ through n iterative updates:
\begin{equation}
    x_{t_{i+1}} = x_{t_i} - (t_i-t_{i+1})v_\theta(x_{t_i}, t_i), \quad x_{t_0}\sim \mathcal{N}(0,I),
\end{equation}
with final output $x_{t_n}$ as the generated sample.

\subsection{Gradient-guided Parameter Efficient Fine-tuning}
Low-Rank Adaptation \cite{hu2022lora} is one of the parameter efficient fine-tuning methods that adapts large pre-trained models by injecting trainable low-rank matrices into existing weight layers. Given a pre-trained weight matrix $W \in \mathbb{R}^{d \times k}$, LoRA models its update as a low-rank decomposition:
\begin{equation}
    W{'}=W + \Delta W, \quad \Delta W=\eta BA,
\end{equation}
where $B \in \mathbb{R}^{d \times r}$ and $A \in \mathbb{R}^{r \times k}$, $r \ll \text{min}(d,k)$ denotes the rank, and $\eta$ is a scaling factor. This significantly reduces the number of trainable parameters and memory footprint.

Theoretically, the update direction of the LoRA module can be aligned with the full fine-tuning at the early training stage via the first gradient descent step, which can be mathematically expressed as:
\begin{equation}
    \eta \left( \Delta BA_\text{init} + B_\text{init}\Delta A \right) = \eta \lambda \left[\nabla_B \mathcal{L} (B_\text{init})A_\text{init} + B_\text{init}\nabla_A \mathcal{L}(A_\text{init}) \right],
\end{equation}
To measure its approximation quality of scaled the update of the weights in full fine-tuning $\zeta \Delta W=\zeta \lambda \nabla_W \mathcal{L}(W_0)$, the Frobenius norm of the difference between these two updates \cite{lora-ga,lora-one} is commonly used as a criterion:
\begin{equation}
\begin{aligned}
    &\left \| \eta  \left( \Delta BA_\text{init} + B_\text{init}\Delta A \right) - \zeta \lambda \nabla_W \mathcal{L}(W_0) \right \|_F \\
    = & \lambda \left \| \eta \nabla_B \mathcal{L}(B_\text{init})A_\text{init} + \eta B_\text{init}\nabla_A \mathcal{L}(A_\text{init}) - \zeta \nabla_W \mathcal{L}(W_0) \right \|_F \\
    = & \lambda \| \eta^2 \nabla_{W'} \mathcal{L}(W_0) \cdot A_{\text{init}}^T A_{\text{init}} + \eta^2 B_{\text{init}} B_{\text{init}}^T \cdot \nabla_W \mathcal{L}(W_0) - \zeta \nabla_W \mathcal{L}(W_0) \|_F .
\end{aligned}
\end{equation}
The svd decomposition can be formulated as $G^{\natural}=U_{G^\natural}S_{G^\natural}V_{G^\natural}$. Matrix $U$ and $V$ are orthogonal matrix, and $S$ is a strictly increasing sequence of singular value formulated as $S_{G^\natural} = \{\sigma_1, \sigma_2,...\sigma_d\}$.



%% file: checklist.tex
\section*{NeurIPS Paper Checklist}
\begin{enumerate}

\item {\bf Claims}
    \item[] Question: Do the main claims made in the abstract and introduction accurately reflect the paper's contributions and scope?
    \item[] Answer: \answerYes{} 
    \item[] Justification: We clearly include our paper's contributions and scope in the abstract, and detail them in the introduction.
    \item[] Guidelines:
    \begin{itemize}
        \item The answer \answerNA{} means that the abstract and introduction do not include the claims made in the paper.
        \item The abstract and/or introduction should clearly state the claims made, including the contributions made in the paper and important assumptions and limitations. A \answerNo{} or \answerNA{} answer to this question will not be perceived well by the reviewers. 
        \item The claims made should match theoretical and experimental results, and reflect how much the results can be expected to generalize to other settings. 
        \item It is fine to include aspirational goals as motivation as long as it is clear that these goals are not attained by the paper. 
    \end{itemize}

\item {\bf Limitations}
    \item[] Question: Does the paper discuss the limitations of the work performed by the authors?
    \item[] Answer: \answerYes{} 
    \item[] Justification: We discuss the limitations of our work from several aspects in the appendix.
    \item[] Guidelines:
    \begin{itemize}
        \item The answer \answerNA{} means that the paper has no limitation while the answer \answerNo{} means that the paper has limitations, but those are not discussed in the paper. 
        \item The authors are encouraged to create a separate ``Limitations'' section in their paper.
        \item The paper should point out any strong assumptions and how robust the results are to violations of these assumptions (e.g., independence assumptions, noiseless settings, model well-specification, asymptotic approximations only holding locally). The authors should reflect on how these assumptions might be violated in practice and what the implications would be.
        \item The authors should reflect on the scope of the claims made, e.g., if the approach was only tested on a few datasets or with a few runs. In general, empirical results often depend on implicit assumptions, which should be articulated.
        \item The authors should reflect on the factors that influence the performance of the approach. For example, a facial recognition algorithm may perform poorly when image resolution is low or images are taken in low lighting. Or a speech-to-text system might not be used reliably to provide closed captions for online lectures because it fails to handle technical jargon.
        \item The authors should discuss the computational efficiency of the proposed algorithms and how they scale with dataset size.
        \item If applicable, the authors should discuss possible limitations of their approach to address problems of privacy and fairness.
        \item While the authors might fear that complete honesty about limitations might be used by reviewers as grounds for rejection, a worse outcome might be that reviewers discover limitations that aren't acknowledged in the paper. The authors should use their best judgment and recognize that individual actions in favor of transparency play an important role in developing norms that preserve the integrity of the community. Reviewers will be specifically instructed to not penalize honesty concerning limitations.
    \end{itemize}

\item {\bf Theory assumptions and proofs}
    \item[] Question: For each theoretical result, does the paper provide the full set of assumptions and a complete (and correct) proof?
    \item[] Answer: \answerYes{} 
    \item[] Justification: This paper presents the hypothesis and partial proof in the main text and the full hypothesis and proof in the appendix.
    \item[] Guidelines:
    \begin{itemize}
        \item The answer \answerNA{} means that the paper does not include theoretical results. 
        \item All the theorems, formulas, and proofs in the paper should be numbered and cross-referenced.
        \item All assumptions should be clearly stated or referenced in the statement of any theorems.
        \item The proofs can either appear in the main paper or the supplemental material, but if they appear in the supplemental material, the authors are encouraged to provide a short proof sketch to provide intuition. 
        \item Inversely, any informal proof provided in the core of the paper should be complemented by formal proofs provided in appendix or supplemental material.
        \item Theorems and Lemmas that the proof relies upon should be properly referenced. 
    \end{itemize}

    \item {\bf Experimental result reproducibility}
    \item[] Question: Does the paper fully disclose all the information needed to reproduce the main experimental results of the paper to the extent that it affects the main claims and/or conclusions of the paper (regardless of whether the code and data are provided or not)?
    \item[] Answer: \answerYes{} 
    \item[] Justification: We fully disclose the conditions to reproduce our results in the paper.
    \item[] Guidelines:
    \begin{itemize}
        \item The answer \answerNA{} means that the paper does not include experiments.
        \item If the paper includes experiments, a \answerNo{} answer to this question will not be perceived well by the reviewers: Making the paper reproducible is important, regardless of whether the code and data are provided or not.
        \item If the contribution is a dataset and\slash or model, the authors should describe the steps taken to make their results reproducible or verifiable. 
        \item Depending on the contribution, reproducibility can be accomplished in various ways. For example, if the contribution is a novel architecture, describing the architecture fully might suffice, or if the contribution is a specific model and empirical evaluation, it may be necessary to either make it possible for others to replicate the model with the same dataset, or provide access to the model. In general. releasing code and data is often one good way to accomplish this, but reproducibility can also be provided via detailed instructions for how to replicate the results, access to a hosted model (e.g., in the case of a large language model), releasing of a model checkpoint, or other means that are appropriate to the research performed.
        \item While NeurIPS does not require releasing code, the conference does require all submissions to provide some reasonable avenue for reproducibility, which may depend on the nature of the contribution. For example
        \begin{enumerate}
            \item If the contribution is primarily a new algorithm, the paper should make it clear how to reproduce that algorithm.
            \item If the contribution is primarily a new model architecture, the paper should describe the architecture clearly and fully.
            \item If the contribution is a new model (e.g., a large language model), then there should either be a way to access this model for reproducing the results or a way to reproduce the model (e.g., with an open-source dataset or instructions for how to construct the dataset).
            \item We recognize that reproducibility may be tricky in some cases, in which case authors are welcome to describe the particular way they provide for reproducibility. In the case of closed-source models, it may be that access to the model is limited in some way (e.g., to registered users), but it should be possible for other researchers to have some path to reproducing or verifying the results.
        \end{enumerate}
    \end{itemize}

\item {\bf Open access to data and code}
    \item[] Question: Does the paper provide open access to the data and code, with sufficient instructions to faithfully reproduce the main experimental results, as described in supplemental material?
    \item[] Answer: \answerYes{} 
    \item[] Justification: The repository link mentioned in the abstract can refer to our code. We also provide the scripts to download or generate the data we used in the paper.
    \item[] Guidelines:
    \begin{itemize}
        \item The answer \answerNA{} means that paper does not include experiments requiring code.
        \item Please see the NeurIPS code and data submission guidelines (\url{https://neurips.cc/public/guides/CodeSubmissionPolicy}) for more details.
        \item While we encourage the release of code and data, we understand that this might not be possible, so \answerNo{} is an acceptable answer. Papers cannot be rejected simply for not including code, unless this is central to the contribution (e.g., for a new open-source benchmark).
        \item The instructions should contain the exact command and environment needed to run to reproduce the results. See the NeurIPS code and data submission guidelines (\url{https://neurips.cc/public/guides/CodeSubmissionPolicy}) for more details.
        \item The authors should provide instructions on data access and preparation, including how to access the raw data, preprocessed data, intermediate data, and generated data, etc.
        \item The authors should provide scripts to reproduce all experimental results for the new proposed method and baselines. If only a subset of experiments are reproducible, they should state which ones are omitted from the script and why.
        \item At submission time, to preserve anonymity, the authors should release anonymized versions (if applicable).
        \item Providing as much information as possible in supplemental material (appended to the paper) is recommended, but including URLs to data and code is permitted.
    \end{itemize}

\item {\bf Experimental setting/details}
    \item[] Question: Does the paper specify all the training and test details (e.g., data splits, hyperparameters, how they were chosen, type of optimizer) necessary to understand the results?
    \item[] Answer: \answerTODO{} 
    \item[] Justification: \justificationTODO{}
    \item[] Guidelines:
    \begin{itemize}
        \item The answer \answerNA{} means that the paper does not include experiments.
        \item The experimental setting should be presented in the core of the paper to a level of detail that is necessary to appreciate the results and make sense of them.
        \item The full details can be provided either with the code, in appendix, or as supplemental material.
    \end{itemize}

\item {\bf Experiment statistical significance}
    \item[] Question: Does the paper report error bars suitably and correctly defined or other appropriate information about the statistical significance of the experiments?
    \item[] Answer: \answerYes{} 
    \item[] Justification: For small dataset, the results can be reproduced through a different server we used. For large dataset and task, we use visualization results of different steps to show the quality of generated images.
    \item[] Guidelines:
    \begin{itemize}
        \item The answer \answerNA{} means that the paper does not include experiments.
        \item The authors should answer \answerYes{} if the results are accompanied by error bars, confidence intervals, or statistical significance tests, at least for the experiments that support the main claims of the paper.
        \item The factors of variability that the error bars are capturing should be clearly stated (for example, train/test split, initialization, random drawing of some parameter, or overall run with given experimental conditions).
        \item The method for calculating the error bars should be explained (closed form formula, call to a library function, bootstrap, etc.)
        \item The assumptions made should be given (e.g., Normally distributed errors).
        \item It should be clear whether the error bar is the standard deviation or the standard error of the mean.
        \item It is OK to report 1-sigma error bars, but one should state it. The authors should preferably report a 2-sigma error bar than state that they have a 96\% CI, if the hypothesis of Normality of errors is not verified.
        \item For asymmetric distributions, the authors should be careful not to show in tables or figures symmetric error bars that would yield results that are out of range (e.g., negative error rates).
        \item If error bars are reported in tables or plots, the authors should explain in the text how they were calculated and reference the corresponding figures or tables in the text.
    \end{itemize}

\item {\bf Experiments compute resources}
    \item[] Question: For each experiment, does the paper provide sufficient information on the computer resources (type of compute workers, memory, time of execution) needed to reproduce the experiments?
    \item[] Answer: \answerYes{} 
    \item[] Justification: We disclose the details of our training server in the appendix.
    \item[] Guidelines:
    \begin{itemize}
        \item The answer \answerNA{} means that the paper does not include experiments.
        \item The paper should indicate the type of compute workers CPU or GPU, internal cluster, or cloud provider, including relevant memory and storage.
        \item The paper should provide the amount of compute required for each of the individual experimental runs as well as estimate the total compute. 
        \item The paper should disclose whether the full research project required more compute than the experiments reported in the paper (e.g., preliminary or failed experiments that didn't make it into the paper). 
    \end{itemize}
    
\item {\bf Code of ethics}
    \item[] Question: Does the research conducted in the paper conform, in every respect, with the NeurIPS Code of Ethics \url{https://neurips.cc/public/EthicsGuidelines}?
    \item[] Answer: \answerYes{} 
    \item[] Justification: The whole paper follows the NeurIPS code of ethics.
    \item[] Guidelines:
    \begin{itemize}
        \item The answer \answerNA{} means that the authors have not reviewed the NeurIPS Code of Ethics.
        \item If the authors answer \answerNo, they should explain the special circumstances that require a deviation from the Code of Ethics.
        \item The authors should make sure to preserve anonymity (e.g., if there is a special consideration due to laws or regulations in their jurisdiction).
    \end{itemize}

\item {\bf Broader impacts}
    \item[] Question: Does the paper discuss both potential positive societal impacts and negative societal impacts of the work performed?
    \item[] Answer: \answerYes{} 
    \item[] Justification: We fully discuss both positive and negative societal impacts of our work.
    \item[] Guidelines:
    \begin{itemize}
        \item The answer \answerNA{} means that there is no societal impact of the work performed.
        \item If the authors answer \answerNA{} or \answerNo, they should explain why their work has no societal impact or why the paper does not address societal impact.
        \item Examples of negative societal impacts include potential malicious or unintended uses (e.g., disinformation, generating fake profiles, surveillance), fairness considerations (e.g., deployment of technologies that could make decisions that unfairly impact specific groups), privacy considerations, and security considerations.
        \item The conference expects that many papers will be foundational research and not tied to particular applications, let alone deployments. However, if there is a direct path to any negative applications, the authors should point it out. For example, it is legitimate to point out that an improvement in the quality of generative models could be used to generate Deepfakes for disinformation. On the other hand, it is not needed to point out that a generic algorithm for optimizing neural networks could enable people to train models that generate Deepfakes faster.
        \item The authors should consider possible harms that could arise when the technology is being used as intended and functioning correctly, harms that could arise when the technology is being used as intended but gives incorrect results, and harms following from (intentional or unintentional) misuse of the technology.
        \item If there are negative societal impacts, the authors could also discuss possible mitigation strategies (e.g., gated release of models, providing defenses in addition to attacks, mechanisms for monitoring misuse, mechanisms to monitor how a system learns from feedback over time, improving the efficiency and accessibility of ML).
    \end{itemize}
    
\item {\bf Safeguards}
    \item[] Question: Does the paper describe safeguards that have been put in place for responsible release of data or models that have a high risk for misuse (e.g., pre-trained language models, image generators, or scraped datasets)?
    \item[] Answer: \answerNA{} 
    \item[] Justification: Our released models do not have the risk for misuse and some safeguard risks.
    \item[] Guidelines:
    \begin{itemize}
        \item The answer \answerNA{} means that the paper poses no such risks.
        \item Released models that have a high risk for misuse or dual-use should be released with necessary safeguards to allow for controlled use of the model, for example by requiring that users adhere to usage guidelines or restrictions to access the model or implementing safety filters. 
        \item Datasets that have been scraped from the Internet could pose safety risks. The authors should describe how they avoided releasing unsafe images.
        \item We recognize that providing effective safeguards is challenging, and many papers do not require this, but we encourage authors to take this into account and make a best faith effort.
    \end{itemize}

\item {\bf Licenses for existing assets}
    \item[] Question: Are the creators or original owners of assets (e.g., code, data, models), used in the paper, properly credited and are the license and terms of use explicitly mentioned and properly respected?
    \item[] Answer: \answerYes{} 
    \item[] Justification: We mention and respect the license and terms of use of the creators of code, data, and models related to this paper.
    \item[] Guidelines:
    \begin{itemize}
        \item The answer \answerNA{} means that the paper does not use existing assets.
        \item The authors should cite the original paper that produced the code package or dataset.
        \item The authors should state which version of the asset is used and, if possible, include a URL.
        \item The name of the license (e.g., CC-BY 4.0) should be included for each asset.
        \item For scraped data from a particular source (e.g., website), the copyright and terms of service of that source should be provided.
        \item If assets are released, the license, copyright information, and terms of use in the package should be provided. For popular datasets, \url{paperswithcode.com/datasets} has curated licenses for some datasets. Their licensing guide can help determine the license of a dataset.
        \item For existing datasets that are re-packaged, both the original license and the license of the derived asset (if it has changed) should be provided.
        \item If this information is not available online, the authors are encouraged to reach out to the asset's creators.
    \end{itemize}

\item {\bf New assets}
    \item[] Question: Are new assets introduced in the paper well documented and is the documentation provided alongside the assets?
    \item[] Answer: \answerYes{} 
    \item[] Justification: Our code and new form of data all have detailed documentation provided with the new assets.
    \item[] Guidelines:
    \begin{itemize}
        \item The answer \answerNA{} means that the paper does not release new assets.
        \item Researchers should communicate the details of the dataset\slash code\slash model as part of their submissions via structured templates. This includes details about training, license, limitations, etc. 
        \item The paper should discuss whether and how consent was obtained from people whose asset is used.
        \item At submission time, remember to anonymize your assets (if applicable). You can either create an anonymized URL or include an anonymized zip file.
    \end{itemize}

\item {\bf Crowdsourcing and research with human subjects}
    \item[] Question: For crowdsourcing experiments and research with human subjects, does the paper include the full text of instructions given to participants and screenshots, if applicable, as well as details about compensation (if any)? 
    \item[] Answer: \answerNA{} 
    \item[] Justification: This paper does not involve anything related to crowdsourcing nor research with human subjects.
    \item[] Guidelines:
    \begin{itemize}
        \item The answer \answerNA{} means that the paper does not involve crowdsourcing nor research with human subjects.
        \item Including this information in the supplemental material is fine, but if the main contribution of the paper involves human subjects, then as much detail as possible should be included in the main paper. 
        \item According to the NeurIPS Code of Ethics, workers involved in data collection, curation, or other labor should be paid at least the minimum wage in the country of the data collector. 
    \end{itemize}

\item {\bf Institutional review board (IRB) approvals or equivalent for research with human subjects}
    \item[] Question: Does the paper describe potential risks incurred by study participants, whether such risks were disclosed to the subjects, and whether Institutional Review Board (IRB) approvals (or an equivalent approval/review based on the requirements of your country or institution) were obtained?
    \item[] Answer: \answerNA{} 
    \item[] Justification: This paper does not involve anything related to crowdsourcing nor research with human subjects.
    \item[] Guidelines:
    \begin{itemize}
        \item The answer \answerNA{} means that the paper does not involve crowdsourcing nor research with human subjects.
        \item Depending on the country in which research is conducted, IRB approval (or equivalent) may be required for any human subjects research. If you obtained IRB approval, you should clearly state this in the paper. 
        \item We recognize that the procedures for this may vary significantly between institutions and locations, and we expect authors to adhere to the NeurIPS Code of Ethics and the guidelines for their institution. 
        \item For initial submissions, do not include any information that would break anonymity (if applicable), such as the institution conducting the review.
    \end{itemize}

\item {\bf Declaration of LLM usage}
    \item[] Question: Does the paper describe the usage of LLMs if it is an important, original, or non-standard component of the core methods in this research? Note that if the LLM is used only for writing, editing, or formatting purposes and does \emph{not} impact the core methodology, scientific rigor, or originality of the research, declaration is not required.
    \item[] Answer: \answerNo{} 
    \item[] Justification: The core method and analysis are done by carefully researchers themselves.
    \item[] Guidelines:
    \begin{itemize}
        \item The answer \answerNA{} means that the core method development in this research does not involve LLMs as any important, original, or non-standard components.
        \item Please refer to our LLM policy in the NeurIPS handbook for what should or should not be described.
    \end{itemize}

\end{enumerate}